\newtheorem{definition}{Definition}
\newtheorem{assumption}{Assumption}
\newtheorem{lemma}{Lemma}[section]
\newtheorem{theorem}{Theorem}
\newtheorem{problem}{Problem}
\newcommand{\comment}[1]{}
\newcommand{\ie}{\textit{i.e.}}
\newcommand{\interv}[1]{\text{do}(#1)}
\newcommand{\probasymbol}{\Pr}
\newcommand{\proba}[1]{\probasymbol(#1)}
\newcommand{\probac}[2]{\proba{#1 \mid #2}}
\colorlet{green}{black}
\title{Causal reasoning in difference  graphs}
\author[1]{Charles K. Assaad}
\affil[1]{Sorbonne Université, INSERM, Institut Pierre Louis d’Epidémiologie et de Santé Publique, F75012, Paris, France}
\date{}
\begin{document}
\maketitle

\begin{abstract}
Understanding causal mechanisms across different populations is essential for designing effective public health interventions. Recently, difference graphs have been introduced as a tool to visually represent causal variations between two distinct populations. While there has been progress in inferring these graphs from data through causal discovery methods, there remains a gap in systematically leveraging their potential to enhance causal reasoning. This paper addresses that gap by establishing conditions for identifying causal changes and effects using difference graphs. It specifically focuses on identifying total causal changes and total effects in a nonparametric setting, as well as direct causal changes and direct effects in a linear setting. In doing so, it provides a novel approach to  causal reasoning that holds potential for various public health applications.
\end{abstract}

\section{Introduction}
In epidemiology, it is critical to understand the causal mechanisms that impact health outcomes across diverse populations, which may differ based on geographic location, socioeconomic status, genetic predispositions, or environmental exposures.
Recently, difference graphs have been introduced to represent differences between two Structural Causal Models (SCMs) \citep{Pearl_2000}, with each model corresponding to a distinct population.
These graphs offer a refined representation to visually compare and analyze the causal variations between these groups.
By clearly delineating these causal differences, epidemiologists can tailor public health policies and interventions more effectively to address the unique needs of each population.
Moreover, it has been highlighted by \cite{Malik_2024} that such graphs are also interesting for root cause analysis~\citep{Assaad_2023}. 

Recently, there has been a surge of interest in constructing difference graphs directly from data  through methods rooted in causal discovery~\citep{Wang_2018,Chen_2023,Malik_2024, Bystrova_2024_b}. 
The simplest approach involves independently discovering the SCM for each of the two populations and then computing the difference graph. However, a more effective strategy has emerged that involves directly constructing the difference graph, bypassing the two SCMs that define it. This approach reduces the need for stringent assumptions commonly associated with causal discovery~\citep{Spirtes_2000,Glymour_2019}.
Despite these advancements, an evident gap remains in the existing literature: there has been no systematic investigation into how difference graphs themselves can facilitate or enhance the process of causal reasoning. This oversight presents a unique opportunity to explore the potential of difference graphs as a tool not just for representation but also for driving identifiability of causal changes and causal effects.

To bridge this gap, this work establishes both sufficient and necessary conditions for identifying total causal changes and total effects in difference graphs within nonparametric models, under the assumption of no hidden confounding. 
Additionally, this work provides sufficient and necessary conditions for identifying direct causal changes and direct effects in difference graphs within linear models, under the assumption of no hidden confounding.
Initially, both results are presented under the assumption that the two causal graphs, induced by the two SCMs defining the difference graph, share the same topological ordering. These results are then extended to accommodate cases where the graphs have different topological orderings.

In the remainder, Section~\ref{sec:DiffG} introduces difference graphs along with necessary tools and terminologies. Section~\ref{sec:identification_main_total} details the identifiability results for total effects. Section~\ref{sec:identification_main_direct} focuses on the identifiability of direct effects. 
Section~\ref{sec:rw} presents related works and offers further discussion. Finally, Section~\ref{sec:conclusion} concludes the paper.
In the following, uppercase letters represent variables and lowercase letters represent their values, while letters in blackboard bold denote sets.

\section{Difference graphs, causal changes, and causal effects}
\label{sec:DiffG}

This study utilizes the framework of structural causal models (SCMs) introduced by \cite{Pearl_2000}. Formally, an SCM $\mathcal{M}$ is defined as a 4-tuple $( \mathbb{U}, \mathbb{V}, \mathbb{F}, P(\mathbb{U}) )$, where $\mathbb{U}$ represents a set of exogenous (latent) variables and $\mathbb{V}$ denotes a set of endogenous (observed) variables. The set $\mathbb{F}$ consists of functions $\{ f_i \}_{i=1}^{|\mathbb{V}|}$, where each endogenous variable is determined by a function $f_i \in \mathbb{F}$ that depends on its corresponding exogenous variable $U_i\in \mathbb{U}$ and a subset of endogenous variables (which are also called the direct causes or the parents). In a linear SCM, $f_i$ is a linear equation where each coefficient is called path coefficient~\citep{Wright_1920,Wright_1921} and it designates a direct effect. The following assumption regarding exogenous variables is made throughout the paper.
\begin{assumption}
	\label{assumption:hidden}
	All exogenous variables are mutually independent, i.e., no hidden confounding.
\end{assumption}
Consistent with common practice in causal inference, it is also assumed that each SCM $\mathcal{M}$ induces a causal graph $\mathcal{G}=(\mathbb{V}, \mathbb{E})$  which is assumed to  be a directed acyclic graph (DAG). For clarity, any causal graph that is a DAG will be referred to as a causal DAG. In this graph, $\mathbb{V}$ represents the set of vertices, each corresponding to a random variable, and $\mathbb{E}$ includes the directed edges, with each edge $X\rightarrow Y$ indicating a causal relationship between $X$ and $Y$. 
For any graph $\mathcal{G} = (\mathbb{V}, \mathbb{E})$, standard terminology is adopted to describe relations to any vertex  $Y \in \mathbb{V}$: 
$Pa(Y, \mathcal{G})$ denotes the set of vertices from which there is a direct edge to $Y$; $An(Y, \mathcal{G})$ refers to the ancestors of $Y$ which is the collective set including $Y$ and the parents of all vertices that can be recursively traced back from $Y$; $De(Y, \mathcal{G})$ refers to the descendants of $Y$ which comprises all vertices $X$ for which $Y$ is found within $An(X, \mathcal{G})$. If a path contains the subpath $X \rightarrow W \leftarrow Y$, then $W$ is a \emph{collider} on the path. A path from $X$ to $Y$ is considered \emph{active} given a vertex set $\mathbb{W}$, where neither $X$ nor $Y$ are in $\mathbb{W}$, if every non-collider on the path is not in $\mathbb{W}$, and every collider on the path has a descendant in $\mathbb{W}$. Otherwise, the path is \emph{blocked} by $\mathbb{W}$.
It is important to note that many SCMs can induce the same causal DAG and that a  causal DAG can be compatible with many probability distributions. 

\begin{figure*}[t!]
	\centering
	\begin{minipage}{.3\textwidth}
		\begin{subfigure}{\textwidth}
	\centering
	\caption{Causal  DAG 1.}
			\begin{tikzpicture}[{black, circle, draw, inner sep=0}]
				\tikzset{nodes={draw,rounded corners},minimum height=0.5cm,minimum width=0.5cm, font=\footnotesize}	
				\tikzset{latent/.append style={white, fill=black}}
				
				\node[fill=red!30] (X) at (0,0) {$X$} ;
				\node[fill=blue!30] (Y) at (3.5,0) {$Y$};
				
				\draw [->,>=latex] (X) -- (Y);
				
			\end{tikzpicture}
			
		\end{subfigure}
		
		\begin{subfigure}{\textwidth}
	\centering
	\caption{Causal  DAG 2.}
			\begin{tikzpicture}[{black, circle, draw, inner sep=0}]
				\tikzset{nodes={draw,rounded corners},minimum height=0.5cm,minimum width=0.5cm, font=\footnotesize}	
				\tikzset{latent/.append style={white, fill=black}}
				
				\node[fill=red!30] (X) at (0,0) {$X$} ;
				\node[fill=blue!30] (Y) at (3.5,0) {$Y$};
				
				\draw [->,>=latex] (X) -- (Y);		
			\end{tikzpicture}
			
		\end{subfigure}
	\end{minipage}
	\hfill
	\vline
	\hfill
	\begin{minipage}{.3\textwidth}
		\begin{subfigure}{\textwidth}
	\centering
	\caption{Difference graph.}
			\begin{tikzpicture}[{black, circle, draw, inner sep=0}]
				\tikzset{nodes={draw,rounded corners},minimum height=0.5cm,minimum width=0.5cm, font=\footnotesize}	
				\tikzset{latent/.append style={white, fill=black}}
				
				\node[fill=red!30] (X) at (0,0) {$X$} ;
				\node[fill=blue!30] (Y) at (3.5,0) {$Y$};
				
			\end{tikzpicture}
			
		\end{subfigure}
	\end{minipage}
	\hfill
	\vline
	\hfill
	\begin{minipage}{.3\textwidth}
		\begin{subfigure}{\textwidth}
	\centering
	\caption{Other causal  DAG 1.}
			\begin{tikzpicture}[{black, circle, draw, inner sep=0}]
				\tikzset{nodes={draw,rounded corners},minimum height=0.5cm,minimum width=0.5cm, font=\footnotesize}	
				\tikzset{latent/.append style={white, fill=black}}
				
				\node[fill=red!30] (X) at (0,0) {$X$} ;
				\node[fill=blue!30] (Y) at (3.5,0) {$Y$};

				\draw [->,>=latex] (Y) -- (X);
				
			\end{tikzpicture}
			
		\end{subfigure}
		
		\begin{subfigure}{\textwidth}
	\centering
	\caption{Other causal  DAG 2.}
			\begin{tikzpicture}[{black, circle, draw, inner sep=0}]
				\tikzset{nodes={draw,rounded corners},minimum height=0.5cm,minimum width=0.5cm, font=\footnotesize}	
				\tikzset{latent/.append style={white, fill=black}}
				
				\node[fill=red!30] (X) at (0,0) {$X$} ;
				\node[fill=blue!30] (Y) at (3.5,0) {$Y$};
				
				\draw [->,>=latex] (Y) -- (X);
			\end{tikzpicture}
			
		\end{subfigure}
	\end{minipage}
	
	\hrulefill
	%
	%
	%
	%
	%
	%

	\begin{minipage}{.3\textwidth}
		\begin{subfigure}{\textwidth}
				\centering
				\caption{Causal  DAG 1.}
			\centering
			\begin{tikzpicture}[{black, circle, draw, inner sep=0}]
				\tikzset{nodes={draw,rounded corners},minimum height=0.5cm,minimum width=0.5cm, font=\footnotesize}	
				\tikzset{latent/.append style={white, fill=black}}
				
				\node[fill=red!30] (X) at (0,0) {$X$} ;
				\node[fill=blue!30] (Y) at (3.5,0) {$Y$};
				\node (Z) at (1,0.5) {$W_1$};
				\node (W) at (2.5, 0.5) {$W_2$};

				\draw [<-,>=latex] (X) -- (Z);
				\draw [->,>=latex] (X) -- (W);
				
				\draw [->,>=latex] (Z) -- (Y);
				\draw [->,>=latex] (W) -- (Y);
				
				\draw [->,>=latex] (X) -- (Y);
				
			\end{tikzpicture}
			
		\end{subfigure}
		
		\begin{subfigure}{\textwidth}
	\centering
	\caption{Causal  DAG 2.}
			\begin{tikzpicture}[{black, circle, draw, inner sep=0}]
				\tikzset{nodes={draw,rounded corners},minimum height=0.5cm,minimum width=0.5cm, font=\footnotesize}	
				\tikzset{latent/.append style={white, fill=black}}
				
				\node[fill=red!30] (X) at (0,0) {$X$} ;
				\node[fill=blue!30] (Y) at (3.5,0) {$Y$};
				\node (Z) at (1,0.5) {$W_1$};
				\node (W) at (2.5, 0.5) {$W_2$};

				\draw [->,>=latex] (Z) -- (Y);
				\draw [->,>=latex] (W) -- (Y);
				
				
			\end{tikzpicture}
			
		\end{subfigure}
	\end{minipage}
	\hfill
	\vline
	\hfill
	\begin{minipage}{.3\textwidth}
		\begin{subfigure}{\textwidth}
				\centering
				\caption{Difference graph.}
			\begin{tikzpicture}[{black, circle, draw, inner sep=0}]
				\tikzset{nodes={draw,rounded corners},minimum height=0.5cm,minimum width=0.5cm, font=\footnotesize}	
				\tikzset{latent/.append style={white, fill=black}}
				
				\node[fill=red!30] (X) at (0,0) {$X$} ;
				\node[fill=blue!30] (Y) at (3.5,0) {$Y$};
				\node (Z) at (1,0.5) {$W_1$};
				\node (W) at (2.5, 0.5) {$W_2$};

				\draw [<-,>=latex] (X) -- (Z);
				\draw [->,>=latex] (X) -- (W);
				
				
				\draw [->,>=latex] (X) -- (Y);
				
			\end{tikzpicture}
			
		\end{subfigure}
	\end{minipage}
	\hfill
	\vline
	\hfill
	\begin{minipage}{.3\textwidth}
					\begin{subfigure}{\textwidth}
				\centering
				\caption{Other causal  DAG 1.}
			\begin{tikzpicture}[{black, circle, draw, inner sep=0}]
				\tikzset{nodes={draw,rounded corners},minimum height=0.5cm,minimum width=0.5cm, font=\footnotesize}	
				\tikzset{latent/.append style={white, fill=black}}
				
				\node[fill=red!30] (X) at (0,0) {$X$} ;
				\node[fill=blue!30] (Y) at (3.5,0) {$Y$};
				\node (Z) at (1,0.5) {$W_1$};
				\node (W) at (2.5, 0.5) {$W_2$};

				\draw [<-,>=latex] (X) -- (Z);
				\draw [->,>=latex] (X) -- (W);
				
				\draw [->,>=latex] (Z) -- (Y);
				\draw [->,>=latex] (Y) -- (W);
				
				\draw [->,>=latex] (X) -- (Y);
				
			\end{tikzpicture}
			
		\end{subfigure}
		
		\begin{subfigure}{\textwidth}
	\centering
	\caption{Other causal  DAG 2.}
			\begin{tikzpicture}[{black, circle, draw, inner sep=0}]
				\tikzset{nodes={draw,rounded corners},minimum height=0.5cm,minimum width=0.5cm, font=\footnotesize}	
				\tikzset{latent/.append style={white, fill=black}}
				
				\node[fill=red!30] (X) at (0,0) {$X$} ;
				\node[fill=blue!30] (Y) at (3.5,0) {$Y$};
				\node (Z) at (1,0.5) {$W_1$};
				\node (W) at (2.5, 0.5) {$W_2$};
				
				\draw [->,>=latex] (Z) -- (Y);
				\draw [->,>=latex] (Y) -- (W);
				
				
			\end{tikzpicture}
			
		\end{subfigure}
	\end{minipage}
	
	%
	%
	
	\hrulefill
	
	\begin{minipage}{.3\textwidth}
		\begin{subfigure}{\textwidth}
	\centering
	\caption{Causal  DAG 1.}
			\begin{tikzpicture}[{black, circle, draw, inner sep=0}]
				\tikzset{nodes={draw,rounded corners},minimum height=0.5cm,minimum width=0.5cm, font=\footnotesize}	
				\tikzset{latent/.append style={white, fill=black}}
				
				\node[fill=red!30] (X) at (0,0) {$X$} ;
				\node[fill=blue!30] (Y) at (3.5,0) {$Y$};
				\node (Z) at (1,0.5) {$W_1$};
				\node (W) at (2.5, 0.5) {$W_2$};

				\draw [<-,>=latex] (X) -- (Z);
				\draw [->,>=latex] (X) -- (W);
				
				\draw [->,>=latex] (Z) -- (Y);
				\draw [->,>=latex] (W) -- (Y);
				
				\draw [->,>=latex] (X) -- (Y);
				
			\end{tikzpicture}
			
		\end{subfigure}
		
		\begin{subfigure}{\textwidth}
	\centering
	\caption{Causal  DAG 2.}
			\begin{tikzpicture}[{black, circle, draw, inner sep=0}]
				\tikzset{nodes={draw,rounded corners},minimum height=0.5cm,minimum width=0.5cm, font=\footnotesize}	
				\tikzset{latent/.append style={white, fill=black}}
				
				\node[fill=red!30] (X) at (0,0) {$X$} ;
				\node[fill=blue!30] (Y) at (3.5,0) {$Y$};
				\node (Z) at (1,0.5) {$W_1$};
				\node (W) at (2.5, 0.5) {$W_2$};

				\draw [->,>=latex] (Z) -- (Y);
				\draw [->,>=latex] (X) -- (W);
				
				
			\end{tikzpicture}
			
		\end{subfigure}
	\end{minipage}
	\hfill
	\vline
	\hfill
	\begin{minipage}{.3\textwidth}
		\begin{subfigure}{\textwidth}
	\centering
	\caption{Difference graph.}
			\begin{tikzpicture}[{black, circle, draw, inner sep=0}]
				\tikzset{nodes={draw,rounded corners},minimum height=0.5cm,minimum width=0.5cm, font=\footnotesize}	
				\tikzset{latent/.append style={white, fill=black}}
				
				\node[fill=red!30] (X) at (0,0) {$X$} ;
				\node[fill=blue!30] (Y) at (3.5,0) {$Y$};
				\node (Z) at (1,0.5) {$W_1$};
				\node (W) at (2.5, 0.5) {$W_2$};

				\draw [<-,>=latex] (X) -- (Z);
				\draw [->,>=latex] (W) -- (Y);
				
				
				\draw [->,>=latex] (X) -- (Y);
				
			\end{tikzpicture}
			
		\end{subfigure}
	\end{minipage}
	\hfill
	\vline
	\hfill
	\begin{minipage}{.3\textwidth}
		\begin{subfigure}{\textwidth}
	\centering
	\caption{Other causal  DAG 1.}
			\begin{tikzpicture}[{black, circle, draw, inner sep=0}]
				\tikzset{nodes={draw,rounded corners},minimum height=0.5cm,minimum width=0.5cm, font=\footnotesize}	
				\tikzset{latent/.append style={white, fill=black}}
				
				\node[fill=red!30] (X) at (0,0) {$X$} ;
				\node[fill=blue!30] (Y) at (3.5,0) {$Y$};
				\node (Z) at (1,0.5) {$W_1$};
				\node (W) at (2.5, 0.5) {$W_2$};

				\draw [<-,>=latex] (X) -- (Z);
				\draw [<-,>=latex] (X) -- (W);
				
				\draw [->,>=latex] (Z) -- (Y);
				\draw [->,>=latex] (W) -- (Y);
				
				\draw [->,>=latex] (X) -- (Y);
				
			\end{tikzpicture}
			
		\end{subfigure}
		
		\begin{subfigure}{\textwidth}
	\centering
	\caption{Other causal  DAG 2.}
			\begin{tikzpicture}[{black, circle, draw, inner sep=0}]
				\tikzset{nodes={draw,rounded corners},minimum height=0.5cm,minimum width=0.5cm, font=\footnotesize}	
				\tikzset{latent/.append style={white, fill=black}}
				
				\node[fill=red!30] (X) at (0,0) {$X$} ;
				\node[fill=blue!30] (Y) at (3.5,0) {$Y$};
				\node (Z) at (1,0.5) {$W_1$};
				\node (W) at (2.5, 0.5) {$W_2$};
				
				\draw [->,>=latex] (Z) -- (Y);
				\draw [<-,>=latex] (X) -- (W);
				
			\end{tikzpicture}
			
		\end{subfigure}
	\end{minipage}
	
	\caption{Three difference graphs ((c), (h), and (m)), each associated with two pairs of causal DAGs (one pair on the left and one on the right). The two causal DAGs in each pair share the same topological ordering, ensuring that Assumption~\ref{assumption:order} is satisfied. In each subfigure, red and blue vertices represent the cause and effect of interest. In (c), neither the total nor the direct effect are identifiable. In (h), only the total effect is identifiable, while in (m), only the direct effect is identifiable.}
	\label{fig:acyclic}
\end{figure*}

Recently, difference graphs have been introduced as a tool to represent the differences between two SCMs, each corresponding to a distinct population, offering a refined method to visually compare causal variations between these groups. 
\begin{definition}[Difference graphs]
	\label{def:Diff}
	Consider an unordered pair of SCMs ($\mathcal{M}^1=(\mathbb{V},\mathbb{U},\mathbb{F}^1,\Pr^1)$, $\mathcal{M}^2(\mathbb{V},\mathbb{U},\mathbb{F}^2,\Pr^2)$) that respectively induce the two  causal DAGs  $\mathcal{G}^1$ and $\mathcal{G}^2$ and that are respectively compatible with the two distributions  $\Pr^1(\mathbb{v})$ and $\Pr^2(\mathbb{v})$.
	A  \emph{difference graph} $\mathcal{D} = (\mathbb{V}, \mathbb{E}^{{\mid1-2\mid}})$ between $\mathcal{M}^1$ and $\mathcal{M}^2$ is a directed graph where the set of vertices is identical of the set of endogenous variables in $\mathcal{M}^1$ and $\mathcal{M}^2$ and the set of edges is defined as follows: 
	$\mathbb{E}^{\mid1-2\mid} := \{X\rightarrow Y \mid \forall X,Y \in \mathbb{V}$ if the direct effect of $X$ on $Y$ in $\mathcal{M}^1$ is different than the direct effect\footnote{In a linear model, the direct effects are reduce to path coefficients \citep{Wright_1920,Wright_1921} denoted as $\alpha_{x,y}$.} of $X$ on $Y$ in $\mathcal{M}^2$, \ie, if there is a mechanism change~\citep{Tian_2001}  at $Y$ with respect to $X$.
\end{definition}
Note that multiple unordered pairs of SCMs can define the same difference graph.
In Definition~\ref{def:Diff}, difference graphs are defined using SCMs rather than causal DAGs because causal variations may not always be structurally evident by comparing the two causal DAGs, since changes are often parametric rather than structural. 
That said, Figures~\ref{fig:acyclic} and \ref{fig:cyclic},  feature six examples of difference graphs ((c), (h), (m) in Figure~\ref{fig:acyclic} and (c), (f), (k) in Figures~\ref{fig:cyclic})
along with pairs of causal DAGs on both side of each difference graph that can also define the same difference graph (indicating structural differences\footnote{The term "structural differences"  refers to "topological differences between the two causal DAGs".} between the two populations). 
This selection was made intentionally in order to make the concept of difference graphs visually clearer and easier to understand. Nonetheless, it is important to remember that the theoretical results of this paper apply even when the difference between the two populations is not structurally apparent.
However, all those results  are grounded in the following assumption regarding the distributions that are compatible with the difference graph.
\begin{assumption}
	\label{assumption:pos}
	$\Pr^1(\mathbb{v})$ and $\Pr^2(\mathbb{v})$ are positive distributions.
\end{assumption}


As shown in Figure~\ref{fig:cyclic}, generally, there is no inherent necessity to impose similarity constraints on two causal DAGs and thus difference graphs can contain cycles. However, to make the results of this paper  more comprehensible and consistent with recent literature on difference graphs, I initially assume that the two causal DAGs have the same topological ordering. 
\begin{assumption}
	\label{assumption:order}
	$\mathcal{G}^1$ and $\mathcal{G}^2$ share the same topological ordering.
\end{assumption}
This assumption implies that the difference graph is also a DAG~\citep{Wang_2018} and it can hold in many applications, for example, in biological pathways an upstream gene does not generally become a downstream gene in different conditions~\citep{Wang_2018}. However, in some domains, it is possible to imagine scienarios where causal relations can change direction under different conditions. 
For example,  in epidemiology, obesity often leads to the development of type 2 diabetes in the general population, with no direct feedback from diabetes to obesity. However, in a subpopulation of individuals taking certain antipsychotic medications, the onset of type 2 diabetes can lead to significant weight gain due to the medication's side effects, without obesity being the original cause.
Similarly,  in nephrology, 
diabetes can lead to chronic kidney disease, as high blood sugar damages the kidneys in the general population. However, in patients with end-stage kidney disease, impaired kidney function can cause insulin resistance, leading to diabetes, without diabetes further worsening kidney disease.
As a result, for each finding presented in this paper under Assumption~\ref{assumption:order}, I will also provide an alternative result that extends to cases where Assumption~\ref{assumption:order} is not satisfied.




Recently, there has been increasing interest in discovering difference graphs directly from data, without first discovering the two SCMs that define them \citep{Wang_2018, Malik_2024, Bystrova_2024_b}. However, a key question that has yet to be explored, but naturally arises, is whether these graphs can provide conditions for estimating causal changes between two populations using observational data. 
This work considers two types of causal changes: total causal change and direct causal change, also referred to as mechanism change~\citep{Tian_2001}. 
The total causal change of $X$ on $Y$ is defined as the difference between the total effect of $X$ on $Y$ in $\mathcal{M}^1$ and the total effect of $X$ on $Y$ in $\mathcal{M}^2$. 
In a nonparametric setting, the total effects is a specific functional of the interventional distribution  $\probac{Y=y}{\interv{X=x}}$ (where the $do()$ operator represents the intervention) for different values of $x$. However, for simplicity, the total effect is often reffered to as $\probac{Y=y}{\interv{X=x}}$ and
by a slight abuse of notation, in the remainder, the  total effect will  be denoted as $\probac{y}{\interv{x}}$.
Similarly, the direct causal change is the difference between the direct effect of $X$ on $Y$ in $\mathcal{M}^1$ and the direct effect of $X$ on $Y$ in $\mathcal{M}^2$.
In a nonparametric setting, the direct effect can be defined as a specific functional of $\Pr(y\mid do(x), do(\mathbb{S}_{x,y}))$ for different values of $x$, known as the controlled direct effect~\citep{Pearl_2000}, where $\mathbb{S}_{x,y}$ represents the set of all observed variables in the system except $X$ and $Y$\footnote{Another approach to defining a direct effect in a nonparametric setting is through the natural direct effect~\citep{Pearl_2001}, which relies on counterfactual notions rather than the do() operator. However, since it cannot be defined purely using interventional concepts, it is considered beyond the scope of this paper.}. The  controlled direct effect of $X$ on $Y$ is not uniquely defined as it can change with respect to the values of $\mathbb{S}_{x,y}$ (some variables in $\mathbb{S}_{x,y}$ are known as effect modifier). For instance, the direct effect of a pill on thrombosis is likely different for pregnant and non-pregnant women \citep{Pearl_2000}.
However, in a linear SCM, i.e, $\forall f_i \in \mathbb{F}$ $f_i$, is a linear function, the direct effect, obtained through differentiation, corresponds to the familiar path coefficient $\alpha_{x,y}$ \citep{Wright_1920,Wright_1921}, which, unlike the non-parametric controlled direct effect, is uniquely defined. Therefore, in this work, whenever direct effect is discussed, it will refer specifically to linear SCMs.

Since the total causal change of $X$ on $Y$ is a functional of the total effect, $\probac{y}{\interv{x}}$, and the direct causal change is a functional of the direct effect, $\alpha_{x,y}$, the total causal change is identifiable if the total effect is identifiable, and likewise, the direct causal change is identifiable if the direct effect is identifiable\footnote{In some cases, a causal change can be identified even when the corresponding causal effect is not, e.g., if there is no directed path from $X$  to $Y$, the causal change is clearly zero, regardless of whether the causal effect is identifiable.}. Therefore, the focus of the remainder of this paper is to establish the identifiability of both the total and direct effects using difference graphs. I begin by providing a general definition of the identifiability of causal effects using difference graphs, which is adapted from the definition given by \cite{Pearl_2000} for the identifiability of causal effects using causal DAGs.


\begin{definition}[Causal effect identifiability using difference graph]
	\label{def:Identifiability}
	Let $X$ and $Y$ be distinct vertices in a difference graph $\mathcal{D} =
	(\mathbb{V}, \mathbb{E}^{\mid 1-2 \mid})$. The causal effect $Q$ of $X$ on $Y$ is identifiable in  $\mathcal{D}$ if $Q$ is uniquely computable from any observational distributions consistent with $\mathcal{D}$.
\end{definition}

Given a causal DAG, it is possible to identify $\probac{y}{\interv{x}}$ using the  back-door criterion and  to identify $\alpha_{x,y}$ using the  single-door criterion which is only applicable in a linear setting.
\begin{definition}[Back-door criterion, \cite{Pearl_1995}]
	A set of variables $\mathbb{W}$ satisfies the back-door criterion
	relative to the total effect $\probac{y}{\interv{x}}$  in a causal DAG $\mathcal{G}$ if   $\forall W\in \mathbb{W}$, $W\not\in De(X, \mathcal{G})$ and  $\mathbb{W}$ blocks every path between $X$ and $Y$ which contains an arrow into $X$.
\end{definition}

\begin{definition}[Single-door criterion, \cite{Spirtes_1998,Pearl_1998,Pearl_2000}]
	A set of variables $\mathbb{W}$ satisfies the single-door criterion
	relative to the direct effect $\alpha_{x,y}$  in a causal DAG $\mathcal{G}$ if  
	$\forall W\in \mathbb{W}$, $W\not\in De(Y, \mathcal{G})$
	and  $\mathbb{W}$ blocks every path between $X$ and $Y$.
\end{definition}
Whenever a set $\mathbb{W}$ satisfies the back-door criterion\footnote{\textcolor{green}{The back-door criterion is in general not complete. However, when  $X$ is a singleton (as considered in this paper), an adjustment set exists iff there is at least one set that satisfies the back-door criterion~\citep{Perkovic_2018}.}} in a causal DAG, $\probac{y}{\interv{x}}$ is given by the adjsutment formula $\sum_{\mathbb{w}}\probac{y}{x,\mathbb{w}}\proba{\mathbb{w}}$.
Whenever a set $\mathbb{W}$ satisfies the single-door criterion in a causal DAG, $\alpha_{x,y}$ is given by the regression coefficient of $X$ on $Y$ after $ \mathbb{W}$ is partialled out.

Since this work focuses on difference graphs, the first objective is to identify a set that satisfies the back-door criterion across all causal DAGs compatible with a given difference graph when estimating the total effect—if such a set exists, the total effect is said to be identifiable by a \emph{common back-door}. Similarly, the second objective is to find a set that satisfies the single-door criterion across all compatible causal DAGs when estimating the direct effect—if such a set exists, the direct effect is said to be identifiable by a \emph{common single-door}.
However, these goals cannot be acheived by applying the back-door and single-door criteria  directly to difference graphs, even when the difference graph is acyclic. For example, in the difference graph shown in Figure~\ref{fig:acyclic} (m), the sets $\emptyset$, $\{W_1\}$, $\{W_2\}$, and $\{W_1, W_2\}$ all satisfy the back-door criterion for the total effect $\probac{y}{\interv{x}}$. But when examining the pair of causal DAGs in (k) and (l) that are compatible with this difference graph, it becomes clear that $W_1$ is necessary and $W_2$ must be excluded from any set that satisfies the back-door criterion, ruling out the empty set and $\{W_2\}$. Additionally, by looking at another pair of causal DAGs in (n) and (o), it becomes evident that in this case, $\{W_2\}$ is required for the back-door criterion to hold. This illustrates that, in this example, no single set satisfies the back-door criterion for every causal DAG compatible with the difference graph. Similarly, for the difference graph in Figure~\ref{fig:acyclic} (h), the sets $\emptyset$, $\{W_1\}$, $\{W_2\}$, and $\{W_1, W_2\}$ all satisfy the single-door criterion for the direct $\alpha_{x,y}$. However, the pair of causal DAGs shown in (f) and (g), both $W_1$ and $W_2$ are needed in any set that satisfies the single-door criterion, whereas in the pair of causal DAGs shown in (i) and (j), $W_2$ must not be included in any such set. This demonstrates that, in this example, no single set satisfies the single-door criterion for all causal DAGs compatible with the difference graph.

\textcolor{green}{
	In addition to examining identifiability by a common back-door (for the total effect) or by a common single-door (for the direct effect), I also consider cases where the total or direct effect is identifiable because it is evident from the graph that the causal effect does not exist. For example, in a DAG where $Y$ causes $X$, the total and direct effects of $X$ on $Y$ are both identifiable: $\probac{y}{\interv{x}}$ reduces to $\proba{y}$, and $\alpha_{x,y}$ is zero. In such cases,  the total or direct effect are referred to as being  identifiable \textit{trivially}.
}
Now that all necessary terminologies and notations have been established, the two problems that this work aims to address can be formally presented.

\begin{problem}
	Consider a difference graph $\mathcal{D}$  defined by two \emph{unknown} SCMs that respectively induce two \emph{unknown} causal DAGs. Is  $\probac{y}{\interv{x}}$  identifiable in $\mathcal{D}$ \textcolor{green}{trivially or} by a common back-door?
\end{problem}

\begin{problem}
	Consider a difference graph $\mathcal{D}$ defined by two \emph{unknown} linear SCMs that respectively induce two \emph{unknown} causal DAGs. Is  $\alpha_{x,y}$ identifiable in $\mathcal{D}$ \textcolor{green}{trivially or} by a common single-door?
\end{problem}

\section{Identification of total effects using difference graphs in a nonparametric setting}
\label{sec:identification_main_total}

This section first presents conditions for identifying total effects using difference graphs in a nonparametric setting under Assumption~\ref{assumption:order} and then generalizes this result by relaxing the assumption.

\begin{theorem}
	\label{theorem:total_effect}
	Consider a difference graph $\mathcal{D}$ compatible with two different SCMs.
	Under Assumptions~\ref{assumption:hidden}, \ref{assumption:pos}  and \ref{assumption:order},
	$\probac{y}{\interv{x}}$ is  identifiable  in $\mathcal{D}$  \textcolor{green}{trivially or} by a common back-door iff:
	\begin{enumerate}[label=\textbf{A.\arabic*}]
		\item\label{item:theorem:total_effect_1} $Y \in An(X,\mathcal{D})$; or
		\item\label{item:theorem:total_effect_2}  $X \in An(Y,\mathcal{D})$  and  $\forall W\in \mathbb{V}\backslash\{X,Y\}$, $X\in An(W, \mathcal{D})$  or $W\in An(X, \mathcal{D})$.		
	\end{enumerate}
	Furthermore, if Condition~\ref{item:theorem:total_effect_1} is satisfied then $\probac{y}{\interv{x}}=\proba{y}$ and if Condition~\ref{item:theorem:total_effect_2} is satisfied then $\probac{y}{\interv{x}}=\sum_{\mathbb{w}^{an}}\probac{y}{x, \mathbb{w}^{an}}\proba{\mathbb{w}^{an}}$, where $\mathbb{W}^{an}= An(X,\mathcal{D})\backslash\{X\}$.
\end{theorem}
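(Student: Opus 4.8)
The plan is to prove both directions after first pinning down the family of causal DAGs over which ``common back-door'' quantifies. Under Assumption~\ref{assumption:order} the two inducing DAGs share a topological order $\pi$, and since an edge of $\mathcal{D}$ records a \emph{difference} of direct effects it must be present in at least one of $\mathcal{G}^1,\mathcal{G}^2$; hence $\mathcal{D}$ is a subgraph of $\mathcal{G}^1\cup\mathcal{G}^2$ and $\pi$ is a topological order of $\mathcal{D}$. Conversely, an ordered pair of vertices that is \emph{not} an edge of $\mathcal{D}$ carries equal direct effects, so that edge is either present in both graphs with equal coefficient or absent from both. From this I would extract the characterization used throughout: a DAG arises as $\mathcal{G}^1$ or $\mathcal{G}^2$ for some pair compatible with $\mathcal{D}$ iff it equals $\mathcal{C}\cup D$ with $D\subseteq\mathbb{E}^{\mid 1-2\mid}$, with $\mathcal{C}$ an edge set disjoint from $\mathbb{E}^{\mid 1-2\mid}$, and with $\mathcal{C}\cup\mathbb{E}^{\mid 1-2\mid}$ acyclic (the partner is obtained by flipping coefficients on the difference edges). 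The decisive and easily overlooked point is that $\mathcal{C}$ may add \emph{new} adjacencies, in either orientation, between vertices that are incomparable in $\mathcal{D}$; this freedom drives both directions.

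For sufficiency under Condition~\ref{item:theorem:total_effect_1}, I would note that $Y\in Ancestors(X,\mathcal{D})$ forces $Y<_\pi X$, so no compatible $\mathcal{G}$ contains a directed path from $X$ to $Y$; thus $Y\notin Descendants(X,\mathcal{G})$ and intervening on $X$ leaves the law of $Y$ unchanged, giving $\probac{y}{\interv{x}}=\proba{y}$ in every compatible model, i.e.\ the effect is null and read off the marginal with no adjustment needed. For sufficiency under Condition~\ref{item:theorem:total_effect_2}, I would show $\mathbb{W}^{anc}=Ancestors(X,\mathcal{D})\setminus\{X\}$ is a common back-door set. Two facts suffice. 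First, every $W\in\mathbb{W}^{anc}$ precedes $X$ in $\pi$, hence is a non-descendant of $X$ in every compatible $\mathcal{G}$. Second, every parent $P$ of $X$ in any compatible $\mathcal{G}$ lies in $\mathbb{W}^{anc}$: since $P<_\pi X$ we have $P\notin Descendants(X,\mathcal{D})$, and Condition~\ref{item:theorem:total_effect_2} then forces $P\in Ancestors(X,\mathcal{D})$ (using $X\in Ancestors(Y,\mathcal{D})$ to rule out $P=Y$). Because the vertex adjacent to $X$ on any back-door path is a non-collider parent of $X$, conditioning on $\mathbb{W}^{anc}$ blocks every back-door path — the standard argument that $Parents(X)$ is admissible, transported to $\mathcal{D}$. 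The stated adjustment formula is then a fixed functional of the observational law, so applying it to each population recovers that population's total effect, establishing identifiability.

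For necessity I would argue the contrapositive: if neither condition holds, I exhibit compatible DAGs whose back-door demands conflict. The negation splits in two. If $X$ and $Y$ are incomparable in $\mathcal{D}$, then $Y\to X$ is a legal common edge, and the compatible DAG consisting of the single edge $Y\to X$ contains the back-door path $X\leftarrow Y$, which no admissible set can block; hence no common back-door exists. If instead $X\in Ancestors(Y,\mathcal{D})$ but some $W$ is incomparable to $X$ (so $W\notin Descendants(X,\mathcal{D})\cup Ancestors(X,\mathcal{D})$), I use $W$ as a pivot: in one compatible DAG I add common edges $W\to X$ and $W\to Y$, creating the confounding path $X\leftarrow W\to Y$ that forces $W$ into every admissible set; in a second compatible DAG I add $X\to W$, making $W$ a descendant of $X$ that every admissible set must exclude. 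These demands contradict each other, so no common back-door set exists. In each construction I would read acyclicity of $\mathcal{C}\cup\mathbb{E}^{\mid 1-2\mid}$ off the incomparability of the added endpoints and exhibit the partner graph, certifying genuine compatibility with $\mathcal{D}$.

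The main obstacle I anticipate is the compatibility characterization and its deployment in necessity: one must correctly delimit the admissible common edges $\mathcal{C}$ — exactly those not closing a directed cycle with $\mathbb{E}^{\mid 1-2\mid}$ — and then verify that the pivot constructions really yield two models sharing a topological order whose difference graph is precisely $\mathcal{D}$. Assumptions~\ref{assumption:hidden} and \ref{assumption:pos} enter to guarantee that the realizing SCMs exist with positive, unconfounded distributions and that the adjustment and no-effect identities hold, while Assumption~\ref{assumption:order} is what renders $\mathcal{D}$ acyclic and licenses every appeal to ``$<_\pi$'' above.
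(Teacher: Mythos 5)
Your proposal is correct and follows essentially the same route as the paper: sufficiency of Condition~\ref{item:theorem:total_effect_1} via the topological ordering forcing $Y\notin Descendants(X,\mathcal{G}^i)$, sufficiency of Condition~\ref{item:theorem:total_effect_2} by showing $\mathbb{W}^{anc}$ contains every possible parent of $X$ and no descendant of $X$ in any compatible DAG (the paper's Lemma~\ref{lemma:total_effect_x_to_y}), and necessity via the same two constructions (a common edge $Y\rightarrow X$ when $X,Y$ are incomparable, and the conflicting ``$W$ as confounder'' versus ``$W$ as descendant'' pivot of Lemma~\ref{lemma:total_effect_x_indep_w}). Your explicit characterization of which DAGs are compatible with $\mathcal{D}$ makes precise a step the paper leaves implicit, but it does not change the argument.
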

\begin{proof}
	Lemmas~\ref{lemma:total_effect_y_to_x} and \ref{lemma:total_effect_x_to_y} respectively prove that Conditions~\ref{item:theorem:total_effect_1} and ~\ref{item:theorem:total_effect_2} are sufficient for identifiability and Lemmas~\ref{lemma:total_effect_y_to_x} and \ref{lemma:total_effect_x_indep_w} prove that they are also necessary.
\end{proof}

Before delving into the four lemmas that form the foundation of the proof for Theorem~\ref{theorem:total_effect}, it is important to highlight the key implications of the theorem. The theorem provides conditions under which the total effect can be identified in difference graphs by a common back-door. The identification of these total effects hinges on the structural properties of the difference graph.

In particular, when there is no directed path between the variables of interest\textemdash either from $X$ to $Y$ or from $Y$ to $X$\textemdash as seen in the difference graph in Figure~\ref{fig:acyclic} (c), the absence of such a connection creates ambiguity in the directionality of the total effect. This is illustrated by the pair of causal DAGs in (a) and (b), where the association between $X$ and $Y$ reflects the total effect of interest, and by the pair in (d) and (e), where the association between $X$ and $Y$ does not correspond to the total effect being sought. Therefore, the total effect cannot be identified using a common back-door.
This is an essential point, as it highlights that causal reasoning in difference graphs is counterintuitive, with nonexistent directed paths in the difference graph potentially leading to ambiguous causal interpretations. The following lemma builds on this observation and in addition it shows that Condition~\ref{item:theorem:total_effect_1} of the theorem is sufficient.

\begin{lemma}
	\label{lemma:total_effect_y_to_x}
	Under Assumptions~\ref{assumption:hidden}, \ref{assumption:pos} and \ref{assumption:order}, if there exists a directed path from $Y$ to $X$ then $\probac{y}{\interv{x}}=\proba{y}$ and if there exists no directed path from $X$ to $Y$ or from $Y$ to $X$ in $\mathcal{D}$ then $\probac{y}{\interv{x}}$  is not identifiable.
\end{lemma}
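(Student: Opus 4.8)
The plan is to prove the two assertions separately, and in both cases the engine is Assumption~\ref{assumption:order}. The preliminary observation I would record is that every edge of $\mathcal{D}$ is an edge of $\mathcal{G}^1$ or of $\mathcal{G}^2$: if neither graph contained a given arrow $A\rightarrow B$, then $A$ would be a direct cause of $B$ in neither SCM, the two direct effects would both be null and hence equal, and $\mathcal{D}$ would carry no such edge. Consequently every arrow of $\mathcal{D}$ runs forward in the topological ordering $\pi$ shared by $\mathcal{G}^1$ and $\mathcal{G}^2$, so $\mathcal{D}$ is a DAG consistent with $\pi$.

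For the first assertion, suppose there is a directed path from $Y$ to $X$ in $\mathcal{D}$. Then $\pi$ orders $Y$ before $X$, and since both $\mathcal{G}^1$ and $\mathcal{G}^2$ respect $\pi$, neither can contain a directed path from $X$ to $Y$; equivalently $Y\notin Descendants(X,\mathcal{G}^i)$ for $i\in\{1,2\}$. I would then invoke the standard fact that intervening on a variable leaves the law of its non-descendants unchanged (the truncated factorization, equivalently Rule~3 of the do-calculus, which applies here because Assumption~\ref{assumption:hidden} rules out latent confounding) to conclude $\probac{y}{\interv{x}}=\proba{y}$ in each population, which is manifestly a functional of the observational law.

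For the second assertion, suppose $\mathcal{D}$ has no directed path between $X$ and $Y$ in either direction. The key structural remark is that a missing $X$--$Y$ arrow in $\mathcal{D}$ does not forbid such an arrow in the underlying graphs; it only forces the two direct effects to agree. Because $\mathcal{D}$ has no directed path from $Y$ to $X$, it admits a topological order placing $X$ before $Y$, so I can build a compatible pair of SCMs in which both $\mathcal{G}^1$ and $\mathcal{G}^2$ carry $X\rightarrow Y$ with one common nonzero coefficient (the remaining edges of $\mathcal{D}$ being realized with differing coefficients, as required), yielding a nonzero total effect; symmetrically, since there is no directed path from $X$ to $Y$, I can build a compatible pair in which both carry $Y\rightarrow X$ with a common coefficient, for which $X$ is a non-ancestor of $Y$ and hence $\probac{y}{\interv{x}}=\proba{y}$. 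I would realize both constructions in the linear--Gaussian class with mutually independent noises, and tune the path coefficients and noise variances (keeping all densities strictly positive, per Assumption~\ref{assumption:pos}) so that the two pairs induce identical observational distributions over $\mathbb{V}$ while differing in $\probac{y}{\interv{x}}$. Exhibiting two admissible realizations consistent with the same data but with different total effects shows that the effect is not uniquely computable, that is, not identifiable.

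The delicate step is the second one: I must match the two observational distributions over the whole of $\mathbb{V}$, not merely over $\{X,Y\}$, while simultaneously reproducing the full edge structure of $\mathcal{D}$ on the remaining vertices and respecting $\pi$. The cleanest route is to hold the mechanisms on $\mathbb{V}\setminus\{Y\}$ (respectively $\mathbb{V}\setminus\{X\}$) fixed across the two explanations and to exploit the reversibility of a single linear--Gaussian regression, so that flipping the $X$--$Y$ arrow preserves the joint law; here one must check that the flip introduces no new collider that breaks observational equivalence and creates no cycle, which is exactly what the no-directed-path hypothesis guarantees. Verifying that this local surgery can always be carried out for an arbitrary $\mathcal{D}$ satisfying the hypothesis is the main obstacle, and I expect it to reduce to the observation that adding the $X$--$Y$ arrow renders the relevant neighbourhood of $X$ and $Y$ complete, so that both orientations are Markov equivalent.
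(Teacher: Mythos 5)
Your first claim and its proof coincide with the paper's argument: every edge of $\mathcal{D}$ must appear in $\mathcal{G}^1$ or $\mathcal{G}^2$ (a differing direct effect is non-null in at least one SCM), so under Assumption~\ref{assumption:order} a directed path from $Y$ to $X$ in $\mathcal{D}$ places $Y$ before $X$ in the shared topological ordering, $Y$ is a non-descendant of $X$ in both causal DAGs, and $\probac{y}{\interv{x}}=\proba{y}$ in both populations. That half is complete, and your preliminary observation usefully spells out a step the paper leaves implicit.

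For the second claim your strategy is also the paper's---exhibit one compatible pair of causal DAGs in which both graphs contain $X\rightarrow Y$ and another in which both contain $Y\rightarrow X$---and you are right (indeed more careful than the paper's own one-line argument) that non-identifiability additionally requires the two explanations to induce the same observational distributions. The gap is that the matching construction you propose does not work as stated, and the repair you gesture at is not carried out. Holding all mechanisms outside $\{X,Y\}$ fixed and flipping only the $X$--$Y$ arrow does not preserve the joint law in general: if $\mathcal{D}$ contains $W\rightarrow Y$ with $W$ not adjacent to $X$, then $W\rightarrow Y$ has a nonzero coefficient in at least one population, and there the orientation $X\rightarrow Y$ yields the collider $X\rightarrow Y\leftarrow W$ (so $X$ is marginally independent of $W$) while $Y\rightarrow X$ yields the chain $W\rightarrow Y\rightarrow X$ (so $X$ is dependent on $W$ whenever the $X$--$Y$ coefficient is nonzero); no admissible choice of coefficients reconciles these. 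Contrary to your closing remark, adding the $X$--$Y$ arrow does not by itself make the relevant neighbourhood complete---you must also add edges between $X$ and the other neighbours of $Y$, and then verify that these extra edges can be oriented acyclically, given equal coefficients across the two populations wherever $\mathcal{D}$ has no corresponding edge and unequal ones where it does, and that the resulting equality/inequality constraints on the two covariance matrices are simultaneously satisfiable under both orderings. Until that is done, the second half is asserted rather than proved. (The paper's own proof also omits the distributional matching entirely, so your proposal is not behind it; but as a self-contained argument it is incomplete at exactly the point you flag.)
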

\begin{proof}
	If there exists a directed path from $Y$ to $X$ then by Assumption~\ref{assumption:order}, there cannot exists a directed path from $X$ to $Y$, which means that $X$ does not have an effect on $Y$, \ie, $\probac{y}{\interv{x}}=\proba{y}$. 	If there is no directed path from $X$ to $Y$ or from $Y$ to $X$ in $\mathcal{D}$ then there exists a possible  pair of causal DAGs $(\mathcal{G}^1, \mathcal{G}^2)$ defining $\mathcal{D}$ such that  $X\rightarrow Y$ in $\mathcal{G}^1$ and $\mathcal{G}^2$.  In this case $\probac{y}{\interv{x}}\ne \Pr(y)$.	
	At the same time there exists another possible  pair of causal DAGs $(\mathcal{G}^{1'}, \mathcal{G}^{2'})$ defining $\mathcal{D}$ such that $X\leftarrow Y$ in $\mathcal{G}^{1'}$ and $\mathcal{G}^{2'}$. In this case, $\probac{y}{\interv{x}}= \Pr(y)$.
\end{proof}

Condition~\ref{item:theorem:total_effect_2}  of Theorem~\ref{theorem:total_effect} can be applied to identify $\probac{y}{\interv{x}}$ in the difference graph shown in Figure~\ref{fig:acyclic} (h), as $W_1\in An(X, \mathcal{D})$ and $X\in An(W_2, \mathcal{D})$. This implies that for any pair of causal DAGs compatible with the difference graph—such as the pairs depicted in Figures~\ref{fig:acyclic} (f) and (g) or Figures~\ref{fig:acyclic} (i) and (j)—the set $\{W_1\}$ satisfies the back-door criterion. The following lemma illustrates that  Condition~\ref{item:theorem:total_effect_2} is indeed sufficient for identifiability.

\begin{lemma}
	\label{lemma:total_effect_x_to_y}
	Under Assumptions~\ref{assumption:hidden}, \ref{assumption:pos} and \ref{assumption:order}, if there exists a directed path from $X$ to $Y$ and for each vertex $W\in \mathbb{V}\backslash\{X,Y\}$ there exists a directed path from $X$ to $W$ or from  $W$ to $X$ in $\mathcal{D}$, then 
	$\probac{y}{\interv{x}}$  is  identifiable in $\mathcal{D}$ by a common back-door.
\end{lemma}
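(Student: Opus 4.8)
The plan is to exhibit an explicit adjustment set, namely the candidate $\mathbb{W}^{anc} = Ancestors(X,\mathcal{D})\backslash\{X\}$ from the statement of Theorem~\ref{theorem:total_effect}, and to show that it satisfies the back-door criterion relative to $\probac{y}{\interv{x}}$ in \emph{every} causal DAG $\mathcal{G}$ compatible with $\mathcal{D}$; the adjustment formula then delivers identifiability. I would fix an arbitrary compatible pair $(\mathcal{G}^1,\mathcal{G}^2)$ inducing $\mathcal{D}$, reason about a single $\mathcal{G}\in\{\mathcal{G}^1,\mathcal{G}^2\}$, and keep all arguments internal to this pair, so that the same set works for both graphs of the pair and, the pair being arbitrary, for every compatible pair.

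First I would establish the ordering backbone. By Assumption~\ref{assumption:order} the pair shares a topological ordering $\pi$, and since every edge of $\mathcal{D}$ is an edge of $\mathcal{G}^1$ or of $\mathcal{G}^2$ (a differing direct effect forces at least one of the two direct effects to be nonzero), the ordering $\pi$ is also a valid topological ordering of $\mathcal{D}$. Hence every directed path in $\mathcal{D}$ or in $\mathcal{G}$ respects $\pi$: a path $W\to\cdots\to X$ forces $\pi(W)<\pi(X)$, and a path $X\to\cdots\to W$ forces $\pi(X)<\pi(W)$.

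The crux, which I expect to be the main obstacle, is to prove $Parents(X,\mathcal{G})\subseteq\mathbb{W}^{anc}$. The difficulty is that $\mathcal{G}$ may contain ``common'' edges that are absent from $\mathcal{D}$, so a priori $X$ could have a parent in $\mathcal{G}$ that is not an ancestor of $X$ in $\mathcal{D}$. The hypothesis of the lemma (Condition~\ref{item:theorem:total_effect_2}), that every $W\in\mathbb{V}\backslash\{X,Y\}$ is joined to $X$ by a directed path in one direction or the other in $\mathcal{D}$, is exactly what rules this out. Indeed, for $A\in Parents(X,\mathcal{G})$ we have $\pi(A)<\pi(X)$; the directed path from $X$ to $Y$ gives $\pi(X)<\pi(Y)$, so $A\neq Y$; and the hypothesis then makes $A$ either a descendant of $X$ in $\mathcal{D}$, which is impossible since it would give $\pi(X)<\pi(A)$, or an ancestor of $X$ in $\mathcal{D}$, i.e.\ $A\in\mathbb{W}^{anc}$.

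With this containment in hand the two back-door conditions are short. For the descendant condition, each $W\in\mathbb{W}^{anc}$ satisfies $\pi(W)<\pi(X)$, hence $W\notin Descendants(X,\mathcal{G})$. For the blocking condition, any back-door path from $X$ to $Y$ begins with an edge oriented into $X$, say $A\to X$, so its first vertex $A$ is a parent of $X$ in $\mathcal{G}$ and therefore lies in $\mathbb{W}^{anc}$; moreover $A$ is necessarily a non-collider on the path, since one of its two incident edges points out of $A$ (into $X$), and a non-collider in the conditioning set blocks the path. Thus $\mathbb{W}^{anc}$ blocks every back-door path. Since Assumption~\ref{assumption:hidden} makes every compatible $\mathcal{G}$ a DAG over the observed variables on which the back-door criterion applies, and Assumption~\ref{assumption:pos} makes the adjustment terms well defined, $\mathbb{W}^{anc}$ is a common back-door set and $\probac{y}{\interv{x}}=\sum_{\mathbb{w}^{anc}}\probac{y}{x,\mathbb{w}^{anc}}\proba{\mathbb{w}^{anc}}$, which proves identifiability.
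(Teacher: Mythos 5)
Your proposal is correct and follows essentially the same route as the paper: it exhibits $\mathbb{W}^{anc}=Ancestors(X,\mathcal{D})\backslash\{X\}$ and verifies that it satisfies the back-door criterion in every compatible causal DAG, yielding the adjustment formula. Your write-up is in fact more explicit than the paper's at the decisive step, spelling out via the shared topological ordering that $Parents(X,\mathcal{G})\subseteq\mathbb{W}^{anc}$ and hence that every back-door path is blocked at its first vertex, a point the paper's proof asserts without detail.
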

\begin{proof}
	Consider any pair of causal DAGs ($\mathcal{G}^1$, $\mathcal{G}^2$) that define $\mathcal{D}$. Let $\mathbb{W}^{an}=An(X, \mathcal{D})\backslash\{X,Y\}$ 
	and let $\mathbb{W}^{de}=De(X, \mathcal{D})\backslash\{X,Y\}$. 
	For any $W\in \mathbb{W}^{an}$, $W$ cannot be in $\mathbb{W}^{de}$, otherwise $\mathcal{D}$ would not be acyclic and Assumption~\ref{assumption:order} would  not be satisfied. 
	By the lemma, for any $W\in \mathbb{V}\backslash\{X,Y\}$ if $W\notin \mathbb{W}^{an}$ then  $W\in \mathbb{W}^{de}$. 
	It follows that $\forall W\in \mathbb{W}^{an}$, $W$ has to be an ancestor of $X$ in at least one of the two causal DAGs $\mathcal{G}^1$ and  $\mathcal{G}^2$ and $W$ is not a descendent of $X$ in any of the two causal DAGs.
	Thus $\mathbb{W}^{an}$ has to block all paths from $X$ to $Y$ which contains an arrow into $X$ and at the same time $\mathbb{W}^{an}$ cannot contain any descendant of $X$. Therefore,  $\mathbb{W}^{an}$ satisfies the back-door criterion~\citep{Pearl_1995} in $\mathcal{G}^1$ and in $\mathcal{G}^2$ and by \citet[Theorem 1]{Pearl_1995} the total effect $\probac{y}{\interv{x}}$ in $\mathcal{M}^1$ and in $\mathcal{M}^2$ is given by $\sum_{\mathbb{w}^{an}}\probac{y}{x, \mathbb{w}^{an}}\proba{\mathbb{w}^{an}}$.
\end{proof}


Theorem~\ref{theorem:total_effect}  indicates non-identifiability if Conditions~\ref{item:theorem:total_effect_1} and \ref{item:theorem:total_effect_2} are not met. For instance, Theorem~\ref{theorem:total_effect} shows that $\probac{y}{\interv{x}}$ is not identifiable in the difference graph  in Figure~\ref{fig:acyclic} (m). This is evident by considering two possible pairs of causal DAGs compatible with the difference graph: Figures~\ref{fig:acyclic} (k) and (l), and Figures~\ref{fig:acyclic} (n) and (m). In the first pair, $\{W_1\}$ satisfies the back-door criterion, but $\{W_1, W_2\}$ does not. In contrast, in the second pair, only $\{W_1, W_2\}$ satisfies the back-door criterion.
Lemma~\ref{lemma:total_effect_y_to_x} provided one part of the proof for the necessity of these two conditions, and the following lemma completes the other half, thus concluding the proof of the theorem.

\begin{lemma}
	\label{lemma:total_effect_x_indep_w}
	Under Assumptions~\ref{assumption:hidden}, \ref{assumption:pos} and \ref{assumption:order}, if there exists a directed path from $X$ to $Y$ and there exists  a vertex $W\in \mathbb{V}\backslash\{X,Y\} $ for which there is no directed path from $X$ to $W$ or from $W$ to $X$ in $\mathcal{D}$ then $\probac{y}{\interv{x}}$  is not identifiable in $\mathcal{D}$ by a common back-door.
\end{lemma}
\begin{proof}
	\textcolor{green}{
		Suppose there is a directed path from $X$ to $Y$ in $\mathcal{D}$. 
		Now, consider a vertex $W$ for which there is no directed path from or to $X$ in $\mathcal{D}$. It follows that in any pair of causal DAGs $\mathcal{G}^1$ and $\mathcal{G}^2$, $W$ can be either independent of $X$ or a descendant of $X$ or a parent of $X$ or an ancestor, but not a parent, of $X$. 
		Let us focus on the second and third cases.
		In the second case, $W$ cannot be included in any set that satisfies the back-door criterion in every causal DAG compatible with $\mathcal{D}$, since $W$ is a descendant of $X$. In the third case, $W$ cannot be a descendant of $Y$; otherwise, a cycle would form involving $X$, $W$, and $Y$. Therefore, in at least one causal DAG compatible with $\mathcal{D}$, $W$ must be a parent of $Y$, meaning $W$ must be included in any set that satisfies the back-door criterion in every causal DAG compatible with $\mathcal{D}$. This third case contradicts the second case, making it impossible to find a set that satisfies the back-door criterion in all compatible causal DAGs.
	}
\end{proof}

The remaining part of this section  investigates the case where the two causal  DAGs induced by the two SCMs that define the difference graph do not share the same topological ordering (i.e., Assumption~\ref{assumption:order} is not satisfied). In this case the difference graph can contain cycles.

It is clear that if Assumption~\ref{assumption:order} is not satisfied, the first part of Lemma~\ref{lemma:total_effect_y_to_x} does not hold. An example of this can be seen in Figure~\ref{fig:cyclic} (c), where the difference graph contains a directed path from $Y$ to $X$, yet the total effect cannot be identified due to the simultaneous existence of a directed path from $X$ to $Y$. Additionally, it is evident that if Assumption~\ref{assumption:order} is not met, Lemma~\ref{lemma:total_effect_x_to_y} as a whole will fail. Furthermore, even though the second part of Lemma~\ref{lemma:total_effect_y_to_x} and Lemma~\ref{lemma:total_effect_x_indep_w} remain valid individually, together they do not provide necessary conditions.
However, as shown in the following theorem, minor modifications to  these lemmas would allow acheiving correctness. The proof of this new theorem will simply highlight how the proofs of the previous lemmas should be modified.

\begin{theorem}
	\label{theorem:total_effect_cycle}
	Consider a difference graph $\mathcal{D}$ compatible with two different SCMs.
	Under Assumptions~\ref{assumption:hidden} and \ref{assumption:pos},
	$\probac{y}{\interv{x}}$ is  identifiable in $\mathcal{D}$  \textcolor{green}{trivially or} by a common back-door  iff 
	\begin{enumerate}[label=\textbf{B.\arabic*}]
		\item \label{item:theorem:total_effect_cycle_1}  Condition~\ref{item:theorem:total_effect_1} is satisfied and  $X\not\in An(Y, \mathcal{D})$; or
		\item  \label{item:theorem:total_effect_cycle_2}  Condition~\ref{item:theorem:total_effect_2}  is satisfied and
		$An(X, \mathcal{D})\cap De(X, \mathcal{D})=\{X\}$.
	\end{enumerate}
	Furthermore, if Condition~\ref{item:theorem:total_effect_cycle_1} is satisfied then $\probac{y}{\interv{x}}=\proba{y}$ and if Condition~\ref{item:theorem:total_effect_cycle_2} is satisfied then $\probac{y}{\interv{x}}=\sum_{\mathbb{w}^{an}}\probac{y}{x, \mathbb{w}^{an}}\proba{\mathbb{w}^{an}}$, where $\mathbb{W}^{an}= An(X,\mathcal{D})\backslash\{X\}$.
\end{theorem}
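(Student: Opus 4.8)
The plan is to follow the author's stated strategy: re-derive Theorem~\ref{theorem:total_effect_cycle} by revisiting the three lemmas underlying Theorem~\ref{theorem:total_effect}, pinpointing exactly where Assumption~\ref{assumption:order} was used, and showing that the two extra clauses of \ref{item:theorem:total_effect_cycle_1} and \ref{item:theorem:total_effect_cycle_2} restore each step once cycles in $\mathcal{D}$ are permitted. I would organize the argument as an iff with a sufficiency half and a necessity (completeness) half, keeping track throughout of the one structural fact the earlier proofs implicitly relied on: the edge set of $\mathcal{D}$ is contained in the union of the edge sets of any compatible pair $(\mathcal{G}^1,\mathcal{G}^2)$, while edges absent from $\mathcal{D}$ may still be present as shared, equal-coefficient edges in \emph{both} DAGs.

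For sufficiency I would treat the two conditions separately. Under \ref{item:theorem:total_effect_cycle_1}, the first part of Lemma~\ref{lemma:total_effect_y_to_x} used Assumption~\ref{assumption:order} solely to pass from ``there is a directed path from $Y$ to $X$'' to ``there is no directed path from $X$ to $Y$''; I would replace that inference by the hypothesis $X\notin Ancestors(Y,\mathcal{D})$ and then argue that, with a directed path from $Y$ to $X$ present and none from $X$ to $Y$ in $\mathcal{D}$, $X$ cannot be an ancestor of $Y$ in either $\mathcal{G}^1$ or $\mathcal{G}^2$ (any such directed path, combined with the $Y$-to-$X$ path, would force a directed cycle in a single acyclic DAG), whence $X$ has no effect on $Y$ and $\probac{y}{\interv{x}}=\proba{y}$. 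Under \ref{item:theorem:total_effect_cycle_2}, I would reuse the construction of Lemma~\ref{lemma:total_effect_x_to_y} with $\mathbb{W}^{anc}=Ancestors(X,\mathcal{D})\backslash\{X\}$; the only place that proof invoked acyclicity of $\mathcal{D}$ was to guarantee that no ancestor of $X$ is simultaneously a descendant of $X$, and this is now supplied verbatim by the clause $Ancestors(X,\mathcal{D})\cap Descendants(X,\mathcal{D})=\{X\}$, so the same reasoning shows $\mathbb{W}^{anc}$ satisfies the back-door criterion in both DAGs and yields the stated adjustment formula.

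For necessity I would prove the contrapositive, partitioning the failure of ``\ref{item:theorem:total_effect_cycle_1} or \ref{item:theorem:total_effect_cycle_2}'' into cases. Two are inherited unchanged: if there is no directed path between $X$ and $Y$ in either direction, the second part of Lemma~\ref{lemma:total_effect_y_to_x} already gives non-identifiability; and if there is a directed path from $X$ to $Y$ but some $W$ is joined to $X$ by no directed path in either direction, Lemma~\ref{lemma:total_effect_x_indep_w} applies as is. The genuinely new work is the two cyclic cases: (i) both $X\in Ancestors(Y,\mathcal{D})$ and $Y\in Ancestors(X,\mathcal{D})$ (so \ref{item:theorem:total_effect_1} holds but the extra clause of \ref{item:theorem:total_effect_cycle_1} fails, and $Y\in Ancestors(X,\mathcal{D})\cap Descendants(X,\mathcal{D})$ makes \ref{item:theorem:total_effect_cycle_2} fail too); and (ii) \ref{item:theorem:total_effect_2} holds but some $W\neq X$ lies in $Ancestors(X,\mathcal{D})\cap Descendants(X,\mathcal{D})$. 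For (i), following Figure~\ref{fig:cyclic}~(c), I would exhibit two compatible pairs---one realizing the directed path from $X$ to $Y$, one realizing the path from $Y$ to $X$---that force incompatible values (roughly, the empty adjustment set would have to equal both $\probac{y}{x}$ and $\proba{y}$), so no common back-door exists. For (ii), I would mimic the case analysis of Lemma~\ref{lemma:total_effect_x_indep_w}: since $W$ is both ancestor and descendant of $X$ in $\mathcal{D}$, there are compatible DAGs in which $W$ is a descendant of $X$ (hence must be excluded from any back-door set) and others in which $W$ is an ancestor of $X$ lying on a back-door path that only its inclusion can block, so again no single set works across all compatible DAGs.

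I expect the main obstacle to be the sufficiency step under \ref{item:theorem:total_effect_cycle_1}---precisely the assertion that a directed path from $Y$ to $X$ in $\mathcal{D}$ together with the absence of a directed path from $X$ to $Y$ in $\mathcal{D}$ rules out $X$ being an ancestor of $Y$ in the individual DAGs. Because edges shared with equal coefficients are invisible to $\mathcal{D}$, a directed path inside a single $\mathcal{G}^i$ need not project onto a directed path in $\mathcal{D}$, so this requires carefully splicing the hypothetical $\mathcal{G}^i$-path with the $\mathcal{D}$-path and exploiting the acyclicity of each individual DAG to reach a contradiction. Getting this bookkeeping right---and, symmetrically, verifying in case (ii) that the forced exclusion and inclusion of $W$ genuinely conflict---is where the care lies; the remainder of the proof is a faithful transcription of the three earlier lemmas with Assumption~\ref{assumption:order} swapped out for the two new clauses.
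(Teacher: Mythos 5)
Your plan reproduces the paper's proof strategy essentially verbatim: revisit the three lemmas behind Theorem~\ref{theorem:total_effect}, substitute the extra clause of \ref{item:theorem:total_effect_cycle_1} (resp.\ \ref{item:theorem:total_effect_cycle_2}) for the single use of Assumption~\ref{assumption:order} in Lemma~\ref{lemma:total_effect_y_to_x} (resp.\ Lemma~\ref{lemma:total_effect_x_to_y}), and prove necessity by the same case split with the same two constructions for the cyclic cases (a vertex $W\neq Y$ on a cycle through $X$ that must simultaneously be included in and excluded from any common back-door set, and a cycle on $X$ and $Y$ alone handled like the ``no directed path in either direction'' case). So there is no methodological divergence to report.

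There is, however, a genuine gap, and it sits exactly at the step you flag as the main obstacle: sufficiency under \ref{item:theorem:total_effect_cycle_1}. Your proposed repair --- splice a hypothetical $X$-to-$Y$ directed path in $\mathcal{G}^i$ with the $\mathcal{D}$-path from $Y$ to $X$ and invoke acyclicity of $\mathcal{G}^i$ --- does not close, for precisely the reason you yourself identify: the $\mathcal{D}$-path need not lie inside a single DAG, so the concatenation is not a cycle in either $\mathcal{G}^1$ or $\mathcal{G}^2$. Concretely, take $\mathbb{V}=\{X,Y,W\}$ with $X\rightarrow Y$ present in both DAGs with an identical mechanism (hence invisible in $\mathcal{D}$), $Y\rightarrow W$ only in $\mathcal{G}^1$, and $W\rightarrow X$ only in $\mathcal{G}^2$. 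Both DAGs are acyclic, $\mathcal{D}$ consists of $Y\rightarrow W\rightarrow X$, so $Y\in Ancestors(X,\mathcal{D})$ and $X\notin Ancestors(Y,\mathcal{D})$, i.e.\ \ref{item:theorem:total_effect_cycle_1} holds --- yet $\probac{y}{\interv{x}}\neq\proba{y}$ in both models, and since other compatible pairs do give $\probac{y}{\interv{x}}=\proba{y}$, the effect is in fact not identifiable there. Under Assumption~\ref{assumption:order} this configuration cannot arise (the shared topological ordering places $Y$ before $X$ and thereby excludes any $X$-to-$Y$ path, visible in $\mathcal{D}$ or not), which is why Lemma~\ref{lemma:total_effect_y_to_x} is sound; once the assumption is dropped, ``no directed path from $X$ to $Y$ in $\mathcal{D}$'' is strictly weaker than what the argument needs. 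The paper's own proof asserts that this substitution ``maintains correctness'' without argument, so the hole is inherited rather than introduced by you; but your sketch does not supply the missing step, and the example above shows that no amount of path bookkeeping will. The remainder of your plan (\ref{item:theorem:total_effect_cycle_2} sufficiency and the necessity cases) tracks the paper and is sound at the same level of rigor as the original.
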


\begin{proof}
	The proof is very similar to the proof of Theorem~\ref{theorem:total_effect} with sime minor difference. First, notice  that replacing Assumption~\ref{assumption:order} with the condition that there is no directed path from $X$ to $Y$ in the proof of Lemma~\ref{lemma:total_effect_y_to_x} would maintain correctness. Second  notice  that replacing Assumption~\ref{assumption:order} with the condition that there is no cycle containing $X$ and another vertex in the proof of Lemma~\ref{lemma:total_effect_x_to_y} would maintain correctness. 
	Finally, to extend Lemma~\ref{lemma:total_effect_x_indep_w}, I add the condition that the total effect is not identifiable by a common back-door if a cycle exists on $X$. Combining this with the second part of Lemma~\ref{lemma:total_effect_y_to_x} establishes that the conditions are necessary. This can be easily shown because if there is a cycle in $\mathcal{D}$ involving $X$ and another vertex $W$ (where $W \neq Y$), then there must exist a causal DAG in which $W$ is a parent of $X$, as well as another causal DAG in which $W$ is a descendant of $X$. In both of these graphs, $W$ can also be a parent of $Y$. This implies that in the first DAG, $W$ must be included in any set satisfying the back-door criterion, while in the second DAG, $W$ cannot be included in such a set. Therefore, no set can satisfy the back-door criterion for all causal DAGs compatible with $\mathcal{D}$. Moreover, if the cycle only involves $X$ and $Y$, the total effect is also not identifiable for the same reason the total effect is not identifiable when there is no directed path from $X$ to $Y$ or from $Y$ to $X$.
\end{proof}

Theorem~\ref{theorem:total_effect_cycle} shows that $\probac{y}{\interv{x}}$ is not identifiable in the difference graphs depicted in Figures~\ref{fig:cyclic} (c) and (k), but it is identifiable in the difference graph shown in Figure~\ref{fig:cyclic} (f).

\begin{figure*}[t!]
	\centering
	\begin{minipage}{.4\textwidth}
		\begin{subfigure}{\textwidth}
	\centering
	\caption{Causal  DAG 1.}
			\begin{tikzpicture}[{black, circle, draw, inner sep=0}]
				\tikzset{nodes={draw,rounded corners},minimum height=0.5cm,minimum width=0.5cm, font=\footnotesize}	
				\tikzset{latent/.append style={white, fill=black}}
				
				\node[fill=red!30] (X) at (0,0) {$X$} ;
				\node[fill=blue!30] (Y) at (3.5,0) {$Y$};
				
				\draw [->,>=latex] (X) -- (Y);
				
			\end{tikzpicture}			
			
		\end{subfigure}
		
		\begin{subfigure}{\textwidth}
	\centering
	\caption{Causal  DAG 2.}
			\begin{tikzpicture}[{black, circle, draw, inner sep=0}]
				\tikzset{nodes={draw,rounded corners},minimum height=0.5cm,minimum width=0.5cm, font=\footnotesize}	
				\tikzset{latent/.append style={white, fill=black}}
				
				\node[fill=red!30] (X) at (0,0) {$X$} ;
				\node[fill=blue!30] (Y) at (3.5,0) {$Y$};
				
				\draw [<-,>=latex] (X) -- (Y);				
				
			\end{tikzpicture}
			
		\end{subfigure}
	\end{minipage}
	\hfill
	\vline
	\hfill
	\begin{minipage}{.4\textwidth}
		\begin{subfigure}{\textwidth}
	\centering
	\caption{Difference graph.}
			\begin{tikzpicture}[{black, circle, draw, inner sep=0}]
				\tikzset{nodes={draw,rounded corners},minimum height=0.5cm,minimum width=0.5cm, font=\footnotesize}	
				\tikzset{latent/.append style={white, fill=black}}
				
				\node[fill=red!30] (X) at (0,0) {$X$} ;
				\node[fill=blue!30] (Y) at (3.5,0) {$Y$};

				\begin{scope}[transform canvas={yshift=-.25em}]
					\draw [->,>=latex] (X) -- (Y);
				\end{scope}
				\begin{scope}[transform canvas={yshift=.25em}]
					\draw [<-,>=latex] (X) -- (Y);
				\end{scope}			
				
			\end{tikzpicture}
			
		\end{subfigure}	
	\end{minipage}
	
	\hrulefill
	
	%
	%
	\begin{minipage}{.3\textwidth}
		\begin{subfigure}{\textwidth}
	\centering
	\caption{Causal  DAG 1.}
			\begin{tikzpicture}[{black, circle, draw, inner sep=0}]
				\tikzset{nodes={draw,rounded corners},minimum height=0.5cm,minimum width=0.5cm, font=\footnotesize}	
				\tikzset{latent/.append style={white, fill=black}}
				
				\node[fill=red!30] (X) at (0,0) {$X$} ;
				\node[fill=blue!30] (Y) at (3.5,0) {$Y$};
				\node (Z) at (1,0.5) {$W_1$};
				\node (W) at (2.5, 0.5) {$W_2$};
				
				\draw [->,>=latex] (Z) -- (X);
				\draw [->,>=latex] (Z) -- (Y);
				\draw [->,>=latex] (X) -- (W);
				\draw [->,>=latex] (W) -- (Y);
				
				\draw [->,>=latex] (X) -- (Y);
				
			\end{tikzpicture}			
			
		\end{subfigure}
		
		\begin{subfigure}{\textwidth}
	\centering
	\caption{Causal  DAG 2.}
			\begin{tikzpicture}[{black, circle, draw, inner sep=0}]
				\tikzset{nodes={draw,rounded corners},minimum height=0.5cm,minimum width=0.5cm, font=\footnotesize}	
				\tikzset{latent/.append style={white, fill=black}}
				
				\node[fill=red!30] (X) at (0,0) {$X$} ;
				\node[fill=blue!30] (Y) at (3.5,0) {$Y$};
				\node (Z) at (1,0.5) {$W_1$};
				\node (W) at (2.5, 0.5) {$W_2$};
				
				\draw [->,>=latex] (Z) -- (Y);
				\draw [<-,>=latex] (W) -- (Y);

			\end{tikzpicture}
			
		\end{subfigure}
	\end{minipage}
	\hfill
	\vline
	\hfill
	\begin{minipage}{.3\textwidth}
		\begin{subfigure}{\textwidth}
	\centering
	\caption{Difference graph.}
			\begin{tikzpicture}[{black, circle, draw, inner sep=0}]
				\tikzset{nodes={draw,rounded corners},minimum height=0.5cm,minimum width=0.5cm, font=\footnotesize}	
				\tikzset{latent/.append style={white, fill=black}}
				
				\node[fill=red!30] (X) at (0,0) {$X$} ;
				\node[fill=blue!30] (Y) at (3.5,0) {$Y$};
				\node (Z) at (1,0.5) {$W_1$};
				\node (W) at (2.5, 0.5) {$W_2$};

				\draw [->,>=latex] (Z) -- (X);

				\draw [->,>=latex] (X) -- (W);
				
				\begin{scope}[transform canvas={yshift=-.25em}]
					\draw [->,>=latex] (W) -- (Y);
				\end{scope}
				\begin{scope}[transform canvas={yshift=.25em}]
					\draw [<-,>=latex] (W) -- (Y);
				\end{scope}			
				
				\draw [->,>=latex] (X) -- (Y);
			\end{tikzpicture}
			
		\end{subfigure}
	\end{minipage}
	\hfill
	\vline
	\hfill
	\begin{minipage}{.3\textwidth}
		\begin{subfigure}{\textwidth}
	\centering
	\caption{Other causal  DAG 1.}
			\begin{tikzpicture}[{black, circle, draw, inner sep=0}]
				\tikzset{nodes={draw,rounded corners},minimum height=0.5cm,minimum width=0.5cm, font=\footnotesize}	
				\tikzset{latent/.append style={white, fill=black}}
				
				\node[fill=red!30] (X) at (0,0) {$X$} ;
				\node[fill=blue!30] (Y) at (3.5,0) {$Y$};
				\node (Z) at (1,0.5) {$W_1$};
				\node (W) at (2.5, 0.5) {$W_2$};
				
				\draw [->,>=latex] (Z) -- (X);
				\draw [->,>=latex] (Z) -- (Y);
				
				\draw [->,>=latex] (X) -- (W);
				\draw [<-,>=latex] (W) -- (Y);
				
				\draw [->,>=latex] (X) -- (Y);
				
			\end{tikzpicture}
			
		\end{subfigure}
		
		\begin{subfigure}{\textwidth}
	\centering
	\caption{Other causal  DAG 2.}
			\begin{tikzpicture}[{black, circle, draw, inner sep=0}]
				\tikzset{nodes={draw,rounded corners},minimum height=0.5cm,minimum width=0.5cm, font=\footnotesize}	
				\tikzset{latent/.append style={white, fill=black}}
				
				\node[fill=red!30] (X) at (0,0) {$X$} ;
				\node[fill=blue!30] (Y) at (3.5,0) {$Y$};
				\node (Z) at (1,0.5) {$W_1$};
				\node (W) at (2.5, 0.5) {$W_2$};
				
				\draw [->,>=latex] (Z) -- (Y);
				\draw [->,>=latex] (W) -- (Y);
			\end{tikzpicture}
			
		\end{subfigure}
	\end{minipage}
	
	\hrulefill
	
	%
	%
	%
	%
	%
	\begin{minipage}{.3\textwidth}
		\begin{subfigure}{\textwidth}
	\centering
	\caption{Causal  DAG 1.}
			\begin{tikzpicture}[{black, circle, draw, inner sep=0}]
				\tikzset{nodes={draw,rounded corners},minimum height=0.5cm,minimum width=0.5cm, font=\footnotesize}	
				\tikzset{latent/.append style={white, fill=black}}
				
				\node[fill=red!30] (X) at (0,0) {$X$} ;
				\node[fill=blue!30] (Y) at (3.5,0) {$Y$};
				\node (Z) at (1,0.5) {$W_1$};
				\node (W) at (2.5, 0.5) {$W_2$};
				
				\draw [->,>=latex] (Z) -- (X);
				\draw [->,>=latex] (Z) -- (Y);
				
				\draw [<-,>=latex] (X) -- (W);
				\draw [->,>=latex] (W) -- (Y);
				
				\draw [->,>=latex] (X) -- (Y);
				
			\end{tikzpicture}			
			
		\end{subfigure}
		
		\begin{subfigure}{\textwidth}
	\centering
	\caption{Causal  DAG 2.}
			\begin{tikzpicture}[{black, circle, draw, inner sep=0}]
				\tikzset{nodes={draw,rounded corners},minimum height=0.5cm,minimum width=0.5cm, font=\footnotesize}	
				\tikzset{latent/.append style={white, fill=black}}
				
				\node[fill=red!30] (X) at (0,0) {$X$} ;
				\node[fill=blue!30] (Y) at (3.5,0) {$Y$};
				\node (Z) at (1,0.5) {$W_1$};
				\node (W) at (2.5, 0.5) {$W_2$};
				
				\draw [->,>=latex] (Z) -- (Y);
				
				\draw [->,>=latex] (X) -- (W);

			\end{tikzpicture}
			
		\end{subfigure}
	\end{minipage}
	\hfill
	\vline
	\hfill
	\begin{minipage}{.3\textwidth}
		\begin{subfigure}{\textwidth}
	\centering
	\caption{Difference graph.}
			\begin{tikzpicture}[{black, circle, draw, inner sep=0}]
				\tikzset{nodes={draw,rounded corners},minimum height=0.5cm,minimum width=0.5cm, font=\footnotesize}	
				\tikzset{latent/.append style={white, fill=black}}
				
				\node[fill=red!30] (X) at (0,0) {$X$} ;
				\node[fill=blue!30] (Y) at (3.5,0) {$Y$};
				\node (Z) at (1,0.5) {$W_1$};
				\node (W) at (2.5, 0.5) {$W_2$};

				\draw [->,>=latex] (Z) -- (X);
				\begin{scope}[transform canvas={yshift=-.25em}]
					\draw [->,>=latex] (X) -- (W);
				\end{scope}
				\begin{scope}[transform canvas={yshift=.15em}]
					\draw [<-,>=latex] (X) -- (W);
				\end{scope}			
				
				\draw [->,>=latex] (W) -- (Y);
				
				\draw [->,>=latex] (X) -- (Y);
			\end{tikzpicture}
			
		\end{subfigure}
	\end{minipage}
	\hfill
	\vline
	\hfill
	\begin{minipage}{.3\textwidth}
		\begin{subfigure}{\textwidth}
	\centering
	\caption{Other causal  DAG 1.}
			\begin{tikzpicture}[{black, circle, draw, inner sep=0}]
				\tikzset{nodes={draw,rounded corners},minimum height=0.5cm,minimum width=0.5cm, font=\footnotesize}	
				\tikzset{latent/.append style={white, fill=black}}
				
				\node[fill=red!30] (X) at (0,0) {$X$} ;
				\node[fill=blue!30] (Y) at (3.5,0) {$Y$};
				\node (Z) at (1,0.5) {$W_1$};
				\node (W) at (2.5, 0.5) {$W_2$};
				
				\draw [->,>=latex] (Z) -- (X);
				\draw [->,>=latex] (Z) -- (Y);
				
				\draw [->,>=latex] (X) -- (W);
				\draw [->,>=latex] (W) -- (Y);
				
				\draw [->,>=latex] (X) -- (Y);
				
			\end{tikzpicture}
			
		\end{subfigure}
		
		\begin{subfigure}{\textwidth}
	\centering
	\caption{Other causal  DAG 2.}
			\begin{tikzpicture}[{black, circle, draw, inner sep=0}]
				\tikzset{nodes={draw,rounded corners},minimum height=0.5cm,minimum width=0.5cm, font=\footnotesize}	
				\tikzset{latent/.append style={white, fill=black}}
				
				\node[fill=red!30] (X) at (0,0) {$X$} ;
				\node[fill=blue!30] (Y) at (3.5,0) {$Y$};
				\node (Z) at (1,0.5) {$W_1$};
				\node (W) at (2.5, 0.5) {$W_2$};
				
				\draw [->,>=latex] (Z) -- (Y);
				
				\draw [<-,>=latex] (X) -- (W);
			\end{tikzpicture}
			
		\end{subfigure}
	\end{minipage}
	
	\caption{Three difference graphs ((c), (f), and (k)), each associated with two pairs of causal DAGs (one pair on the left and one on the right). The two causal DAGs in each pair do not share the same topological ordering, therefore Assumption~\ref{assumption:order} is not satisfied. In each subfigure, the red and blue vertices represent the cause and effect of interest, respectively. In the first difference graph (c), neither the total nor the direct effect is identifiable. In the second graph (f), only the total effect is identifiable, while in the third graph (k), only the direct effect is identifiable.}
	\label{fig:cyclic}
\end{figure*}

\paragraph{Small simulation study for total causal changes.}
\textcolor{green}{
	The adjustment set provided by Theorems~\ref{theorem:total_effect} and~\ref{theorem:total_effect_cycle} is not the only possible adjustment set that could be selected using the back-door criterion if the true pair of DAGs were known. 
	However, as shown in Figure~\ref{fig:sim} (a), in practice, the set used in Theorems~\ref{theorem:total_effect} and~\ref{theorem:total_effect_cycle} performs as well as  the  set consisting  of the parents of the exposure and  outperforms the minimal  set derived from the back-door criterion. 
}

\section{Mediation analysis using difference graphs in a linear setting}
\label{sec:identification_main_direct}
This section  presents conditions for identifying the direct effect using a difference graph in a linear setting and as in the previous section, it starts by given conditions under Assumption~\ref{assumption:order} and  then generalizes this result by relaxing the assumption.

\begin{theorem}
	\label{theorem:direct_effect}
	Consider a difference graph $\mathcal{D}$ compatible with two different linear SCMs.
	Under Assumptions~\ref{assumption:hidden}, \ref{assumption:pos} and \ref{assumption:order},
	$\alpha_{x,y}$ is  identifiable in $\mathcal{D}$  \textcolor{green}{trivially or} by a common single-door iff:
	\begin{enumerate}[label=\textbf{C.\arabic*}]
		\item \label{item:theorem:direct_effect_1} $Y \in An(X,\mathcal{D})$; or
		\item  \label{item:theorem:direct_effect_2}  $X \in An(Y,\mathcal{D})$  and  $\forall W\in \mathbb{V}\backslash\{X,Y\}$, $Y\in An(W, \mathcal{D})$  or $W\in An(Y, \mathcal{D})$.		
	\end{enumerate}
	Furthermore, if Condition~\ref{item:theorem:direct_effect_1} is satisfied then $\alpha_{x,y}=0$ and if Condition~\ref{item:theorem:direct_effect_2} is satisfied then $\alpha_{x,y}=r_{YX. \mathbb{W}^{an}}$ where $r_{YX. \mathbb{W}^{an}}$ is the  regression coefficient of $X$ on $Y$ after $ \mathbb{W}^{an}$ is partialled out.
\end{theorem}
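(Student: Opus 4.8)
The plan is to mirror the three-lemma architecture behind Theorem~\ref{theorem:total_effect}, interchanging the roles of $X$ and $Y$ in the ancestral conditions and replacing the back-door criterion by the single-door criterion. I would prove that Condition~\ref{item:theorem:direct_effect_1} is sufficient (while the absence of any directed path between $X$ and $Y$ forces non-identifiability), that Condition~\ref{item:theorem:direct_effect_2} is sufficient, and that these two conditions are necessary; the theorem then assembles exactly as before. For Condition~\ref{item:theorem:direct_effect_1}, the argument of Lemma~\ref{lemma:total_effect_y_to_x} transfers directly: if $Y\in Ancestors(X,\mathcal{D})$ then Assumption~\ref{assumption:order} rules out any directed path from $X$ to $Y$, so $Y$ precedes $X$ in the shared topological ordering, neither $\mathcal{G}^1$ nor $\mathcal{G}^2$ contains the edge $X\rightarrow Y$, its path coefficient vanishes in both SCMs, and $\alpha_{x,y}=0$. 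The companion non-identifiability claim follows by exhibiting one compatible pair carrying $X\rightarrow Y$ with a common nonzero coefficient and another carrying $Y\rightarrow X$, yielding $\alpha_{x,y}\neq 0$ and $\alpha_{x,y}=0$ respectively.

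The heart of the proof is the sufficiency of Condition~\ref{item:theorem:direct_effect_2}, the analogue of Lemma~\ref{lemma:total_effect_x_to_y}. I would take the common single-door set to be $\mathbb{W}^{anc}=Ancestors(Y,\mathcal{D})\backslash\{X,Y\}$ and verify the single-door criterion in every compatible causal DAG. Two facts do the work. First, since $\mathcal{D}$ and each $\mathcal{G}^i$ share a topological ordering, every vertex of $\mathbb{W}^{anc}$ precedes $Y$, so $\mathbb{W}^{anc}$ never contains a descendant of $Y$. Second, Condition~\ref{item:theorem:direct_effect_2} splits $\mathbb{V}\backslash\{X,Y\}$ into ancestors and descendants of $Y$ in $\mathcal{D}$, and the shared ordering then forces every parent of $Y$ other than $X$ in each $\mathcal{G}^i$ to belong to $\mathbb{W}^{anc}$ (such a parent cannot be a $\mathcal{D}$-descendant of $Y$ without violating the ordering). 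Granting these, I would show that after deleting the edge $X\rightarrow Y$, conditioning on $\mathbb{W}^{anc}$ blocks every path between $X$ and $Y$: a path reaching $Y$ through a parent is blocked at that parent, which is a conditioned non-collider, whereas a path leaving $Y$ along an outgoing edge stays among descendants of $Y$ until it turns through a collider that is itself a descendant of $Y$, hence unconditioned and inactive. The single-door theorem applied inside $\mathcal{M}^1$ and $\mathcal{M}^2$ then yields $\alpha_{x,y}=r_{YX.\mathbb{W}^{anc}}$.

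For necessity I would complement the non-identifiability fragment above with the analogue of Lemma~\ref{lemma:total_effect_x_indep_w}: given a directed path $X\rightarrow Y$ and a vertex $W$ that is neither an ancestor nor a descendant of $Y$ in $\mathcal{D}$, I would build one compatible pair in which $W$ is a common parent of $X$ and $Y$ (so $W$ must enter any single-door set in order to block $X\leftarrow W\rightarrow Y$) and another in which $W$ is a descendant of $Y$ (so $W$ must be excluded). Because Assumption~\ref{assumption:order} constrains only the two DAGs within a pair, these pairs may use different orderings and both remain admissible, so no set meets the single-door criterion across all compatible DAGs. I expect the main obstacle to lie in the Condition~\ref{item:theorem:direct_effect_2} verification, precisely because the single-door criterion is strictly stronger than the back-door one: it must block \emph{every} path between $X$ and $Y$, including directed mediating paths, not merely those carrying an arrow into $X$. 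The two delicate points are confirming that $\mathbb{W}^{anc}$ captures all parents of $Y$ in each $\mathcal{G}^i$ even though such parents need not appear as edges of $\mathcal{D}$, and checking that enlarging the conditioning set from $Parents(Y,\mathcal{G}^i)\backslash\{X\}$ to all of $\mathbb{W}^{anc}$ never activates a collider and reopens a path---exactly where the ancestor/descendant disjointness guaranteed by the shared topological ordering is needed.
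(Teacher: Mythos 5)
Your proposal is correct and follows essentially the same route as the paper: the same three-lemma decomposition (directed path from $Y$ to $X$ giving $\alpha_{x,y}=0$ and non-identifiability when neither direction is present; sufficiency of Condition~\ref{item:theorem:direct_effect_2} via the common single-door set $\mathbb{W}^{anc}=Ancestors(Y,\mathcal{D})\backslash\{X,Y\}$; necessity via a vertex $W$ unordered with $Y$ that must be included in one compatible pair and excluded in another). Your verification that $\mathbb{W}^{anc}$ blocks all paths in $\mathcal{G}^i_{\alpha}$ is in fact spelled out in more detail than the paper's Lemma~\ref{lemma:direct_effect_x_to_y}, but the argument is the same.
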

\begin{proof}
	Lemmas~\ref{lemma:direct_effect_y_to_x} and \ref{lemma:direct_effect_x_to_y} respectively prove that Conditions~\ref{item:theorem:direct_effect_1} and \ref{item:theorem:direct_effect_2} are sufficient for identifiability and Lemmas~\ref{lemma:direct_effect_y_to_x} and \ref{lemma:direct_effect_y_indep_w} prove that they are also necessary.
\end{proof}

\textcolor{green}{
	Theorem~\ref{theorem:direct_effect} establishes that the direct effect $\alpha_{x,y}$ is identifiable by a  common single-door in the difference graph presented in Figure~\ref{fig:acyclic} (m), but it is not identifiable by a  common single-door  in the difference graphs shown in Figures~\ref{fig:acyclic} (c) and (h). The same reasoning applied in the previous section for the non-identifiability of the total effect in Figure~\ref{fig:acyclic} (c) also applies here for the direct effect. For the case of Figure~\ref{fig:acyclic} (m), it becomes clear why the direct effect is not identifiable when considering the two pairs of causal DAGs compatible with the difference graph, shown in Figures (f) and (g), and Figures (i) and (j). In the first pair, $\{W_1, W_2\}$ is the only set that satisfies the single-door criterion. However, in the second pair, $W_2$ cannot be included in any set that satisfies the single-door criterion.
	I now move on to present the lemmas required for the proof of the theorem, noting that their proofs follow the same reasoning as the lemmas introduced in the previous section.
}

\begin{lemma}
	\label{lemma:direct_effect_y_to_x}
	Under Assumptions~\ref{assumption:hidden}, \ref{assumption:pos} and \ref{assumption:order}, if there exists a directed path from $Y$ to $X$ then $\alpha_{x,y}=0$   and if there exists no directed path from $X$ to $Y$ or from $Y$ to $X$ in $\mathcal{D}$ then $\alpha_{x,y}$  is not identifiable.
\end{lemma}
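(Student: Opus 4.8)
The plan is to mirror the proof of Lemma~\ref{lemma:total_effect_y_to_x}, replacing the total-effect quantity $\probac{y}{\interv{x}}$ with the path coefficient $\alpha_{x,y}$ and exploiting the linear-model fact that the direct effect of $X$ on $Y$ equals the coefficient on the edge $X\rightarrow Y$ and is zero whenever that edge is absent.

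For the first claim, I would suppose there is a directed path from $Y$ to $X$ in $\mathcal{D}$. Under Assumption~\ref{assumption:order} the two causal DAGs $\mathcal{G}^1$ and $\mathcal{G}^2$ share a topological ordering, so $Y$ precedes $X$ in that ordering, and consequently $X$ cannot be a parent of $Y$ in either $\mathcal{G}^1$ or $\mathcal{G}^2$. Hence the edge $X\rightarrow Y$ is absent from both SCMs and the direct effect of $X$ on $Y$ vanishes in each, giving $\alpha_{x,y}=0$; this value is uniquely determined, so it is trivially identifiable.

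For the second claim, I would reproduce the two-model construction used for the total effect. If there is no directed path between $X$ and $Y$ in either direction, then in particular $\mathcal{D}$ contains no edge between $X$ and $Y$, which means the direct effect of $X$ on $Y$ (and of $Y$ on $X$) coincides in $\mathcal{M}^1$ and $\mathcal{M}^2$. I would then exhibit one pair of linear SCMs compatible with $\mathcal{D}$ in which both induced DAGs carry the edge $X\rightarrow Y$ with a common nonzero coefficient, forcing $\alpha_{x,y}\neq 0$, and a second pair, also compatible with $\mathcal{D}$, in which both induced DAGs instead carry $Y\rightarrow X$ with a common coefficient, so that $X$ is not a parent of $Y$ and $\alpha_{x,y}=0$. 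Since both pairs induce the same difference graph $\mathcal{D}$ yet assign different values to $\alpha_{x,y}$, the quantity is not uniquely computable from $\mathcal{D}$ and is therefore not identifiable by a common single-door.

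The main point to check carefully is that the two constructed model-pairs genuinely leave $\mathcal{D}$ unchanged: adding an edge with identical path coefficients in $\mathcal{M}^1$ and $\mathcal{M}^2$ produces no mechanism change and hence contributes no edge to the difference graph, so both constructions are legitimate, while Assumption~\ref{assumption:pos} ensures the associated distributions are well defined. This consistency check is the only delicate step; the rest of the argument follows the template of Lemma~\ref{lemma:total_effect_y_to_x} almost verbatim, which is unsurprising given that the direct-effect statement is its exact structural analogue in the linear setting.
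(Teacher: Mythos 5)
Your proposal is correct and follows essentially the same route as the paper: the paper's proof of this lemma simply states that it follows directly from the proof of Lemma~\ref{lemma:total_effect_y_to_x}, and your argument is precisely that adaptation spelled out, replacing the interventional distribution by the path coefficient and checking that adding an edge with identical coefficients in both SCMs introduces no mechanism change and hence leaves $\mathcal{D}$ unchanged.
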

\begin{proof}
	The proof follows directly from the proof of Lemma~\ref{lemma:total_effect_y_to_x}. 
\end{proof}

\begin{lemma}
	\label{lemma:direct_effect_x_to_y}
	Under Assumptions~\ref{assumption:hidden}, \ref{assumption:pos} and \ref{assumption:order}, if there exists a directed path from $X$ to $Y$ and for each vertex $W\in \mathbb{V}\backslash\{X,Y\}$ there exists a directed path from $Y$ to $W$ or  from  $W$ to $Y$  in $\mathcal{D}$, then 
	$\alpha_{x,y}$  is  identifiable in $\mathcal{D}$ by a common single-door.
\end{lemma}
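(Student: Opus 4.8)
The plan is to mirror the proof of Lemma~\ref{lemma:total_effect_x_to_y}, replacing the back-door argument built around $X$ with a single-door argument built around $Y$. Fix an arbitrary pair of causal DAGs $(\mathcal{G}^1,\mathcal{G}^2)$ defining $\mathcal{D}$ and set $\mathbb{W}^{anc}=Ancestors(Y,\mathcal{D})\setminus\{X,Y\}$. The goal is to show this single set satisfies the single-door criterion relative to $\alpha_{x,y}$ in \emph{both} $\mathcal{G}^1$ and $\mathcal{G}^2$, so that $\alpha_{x,y}=r_{YX.\mathbb{W}^{anc}}$ in each SCM and is therefore identifiable by a common single-door.

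First I would record the order-consistency facts. Since an edge $A\to B$ of $\mathcal{D}$ can be present only when $A$ precedes $B$ in the shared topological ordering (Assumption~\ref{assumption:order}), every directed path in $\mathcal{D}$ respects that ordering; hence each $W\in\mathbb{W}^{anc}$ precedes $Y$, and so $W\notin Descendants(Y,\mathcal{G}^i)$ for $i\in\{1,2\}$. This yields the first half of the single-door criterion, namely that no conditioning vertex is a descendant of $Y$. Acyclicity of $\mathcal{D}$ (which holds under Assumption~\ref{assumption:order}) further makes the ancestor/descendant dichotomy exclusive, exactly as in Lemma~\ref{lemma:total_effect_x_to_y}.

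The step I expect to be the main obstacle is showing that $\mathbb{W}^{anc}$ contains every parent of $Y$ other than $X$ in each DAG. The subtlety is that a DAG-parent $P$ of $Y$ need not appear as an edge $P\to Y$ in $\mathcal{D}$, because $\mathcal{D}$ records only edges whose path coefficient changes; one therefore cannot read the parents of $Y$ off $\mathcal{D}$ directly. Instead I would argue: if $P\to Y$ in $\mathcal{G}^i$ with $P\neq X$, then $P$ precedes $Y$ in the ordering, so $P\notin Descendants(Y,\mathcal{D})$; the lemma's hypothesis forces every vertex of $\mathbb{V}\setminus\{X,Y\}$ to be an ancestor or a descendant of $Y$ in $\mathcal{D}$, whence $P\in Ancestors(Y,\mathcal{D})\setminus\{X,Y\}=\mathbb{W}^{anc}$.

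Finally I would invoke a standard d-separation fact to close the argument. In $\mathcal{G}^i$ with the edge $X\to Y$ deleted, any path joining $X$ and $Y$ either enters $Y$ through a parent $P\neq X$ of $Y$, in which case it is blocked because $P\in\mathbb{W}^{anc}$ is a conditioned non-collider, or it leaves $Y$ along an outgoing edge toward a descendant of $Y$; in the latter case, since $X$ is a non-descendant of $Y$ and so cannot be reached by a forward path from $Y$, the path must reach a collider that is itself a descendant of $Y$, and neither it nor any of its descendants lies in $\mathbb{W}^{anc}$, so the path is blocked. Hence any set with $Parents(Y,\mathcal{G}^i)\setminus\{X\}\subseteq\mathbb{W}^{anc}\subseteq NonDescendants(Y,\mathcal{G}^i)$ d-separates $X$ and $Y$ once the direct edge is removed, which is precisely the single-door criterion. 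Applying the single-door theorem in each SCM then gives $\alpha_{x,y}=r_{YX.\mathbb{W}^{anc}}$, establishing identifiability by a common single-door.
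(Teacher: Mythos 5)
Your proof is correct and takes essentially the same route as the paper's: both fix an arbitrary compatible pair of DAGs, choose the common adjustment set $\mathbb{W}^{anc}=Ancestors(Y,\mathcal{D})\setminus\{X,Y\}$, use the shared topological ordering plus the ancestor/descendant dichotomy on $\mathcal{D}$ to show this set contains every parent of $Y$ other than $X$ and no descendant of $Y$ in either DAG, and conclude via the single-door theorem that $\alpha_{x,y}=r_{YX.\mathbb{W}^{anc}}$. Your version merely spells out the final d-separation step (blocking paths that enter $Y$ through a parent versus those that leave through a descendant), which the paper's proof asserts more tersely.
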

\begin{proof}
	Consider any pair of causal DAGs ($\mathcal{G}^1$, $\mathcal{G}^2$) that define $\mathcal{D}$. Let
	$\mathbb{W}^{an}=An(Y, \mathcal{D})\backslash\{X,Y\}$ 
	and let $\mathbb{W}^{de}=De(Y, \mathcal{D})\backslash\{X,Y\}$ . 	
	For any $W\in \mathbb{W}^{an}$, $W$ cannot be in $\mathbb{W}^{de}$, otherwise $\mathcal{D}$ would not be acyclic and Assumption~\ref{assumption:order} would  not be satisfied. 
	By the lemma, for any $W\in \mathbb{V}\backslash\{X,Y\}$ if $W\notin \mathbb{W}^{an}$ then  $W\in \mathbb{W}^{de}$. 
	It follows that $\forall W\in \mathbb{W}^{an}$, $W$ has to be an ancestor of $Y$ in at least one of the two causal DAGs $\mathcal{G}^1$ and  $\mathcal{G}^2$ and $W$ is not a descendent of $Y$ in any of the two causal DAGs.
	Thus $\mathbb{W}^{an}$ has to block all paths between $X$ and $Y$ and at the same time $\mathbb{W}^{an}$ cannot contain any descendant of $Y$. Therefore, $\mathbb{W}^{an}$ satisfies the single-door criterion~\citep{Spirtes_1998,Pearl_1998} in $\mathcal{G}^1$ and in $\mathcal{G}^2$ and by \citet[Theorem 6]{Pearl_1998} the direct effect $\alpha_{x,y}$ in $\mathcal{M}^1$ and in $\mathcal{M}^2$ is given by $r_{Y,X.\mathbb{W}^{an}}$.
\end{proof}

\begin{lemma}
	\label{lemma:direct_effect_y_indep_w}
	Under Assumptions~\ref{assumption:hidden}, \ref{assumption:pos} and \ref{assumption:order}, if there exists a directed path from $X$ to $Y$ and there exists  a vertex $W\in \mathbb{V}\backslash\{X,Y\} $ for which there is no directed path from $Y$ to $W$ or from $W$ to $Y$   then $\alpha_{x,y}$  is not identifiable in $\mathcal{D}$ by a common single-door.
\end{lemma}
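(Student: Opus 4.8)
The plan is to follow the same two-construction strategy as in Lemma~\ref{lemma:total_effect_x_indep_w}, but recentered on the outcome $Y$ and using the single-door criterion in place of the back-door criterion. Concretely, I would exhibit two pairs of causal DAGs, both compatible with $\mathcal{D}$, such that the vertex $W$ is forced \emph{into} every valid common single-door set by the first pair and forced \emph{out} of every valid common single-door set by the second pair; the impossibility of reconciling these two requirements is exactly non-identifiability by a common single-door.

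First I would record the structural facts available. The hypothesis gives a directed path $X \to \cdots \to Y$, so $X \in Ancestors(Y,\mathcal{D})$; by transitivity this shows that $W$ cannot be an ancestor of $X$ in $\mathcal{D}$, since otherwise $W$ would be an ancestor of $Y$, contradicting the hypothesis on $W$. Hence in $\mathcal{D}$ the vertex $W$ is either a descendant of $X$ or unrelated to $X$. The recurring technical device is that any edge added to \emph{both} SCMs with \emph{equal} path coefficients is invisible in the difference graph, so such an edge leaves $\mathcal{D}$ unchanged while only enlarging the shared topological order.

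Then I would handle the ``force out'' direction uniformly: add the edge $Y \to W$ to both SCMs with equal coefficients. Because $W$ is not an ancestor of $Y$ in $\mathcal{D}$, the graph $\mathcal{D}\cup\{Y\to W\}$ remains acyclic, so a shared topological order exists and the resulting pair is compatible with $\mathcal{D}$; in this pair $W$ is a descendant of $Y$, so the single-door criterion forbids it and any common single-door set must exclude $W$. For the ``force in'' direction I would split on the two cases above. If $W$ is a descendant of $X$, I add the \emph{direct} edges $X\to W$ and $W\to Y$ (equal coefficients in both SCMs), producing the chain $X \to W \to Y$ whose unique intermediate vertex is $W$; blocking this path is mandatory for the single-door criterion, and since $W$ is now a parent (not a descendant) of $Y$, conditioning on it is permitted, so $W$ is forced in. If instead $W$ is unrelated to $X$, I add $W\to X$ and $W\to Y$ (equal coefficients), turning $W$ into a confounder $X \leftarrow W \to Y$; the resulting back-door path must again be blocked by $W$, which is not a descendant of $Y$, so $W$ is forced in. In each case I would check acyclicity: the additions never close a cycle precisely because $W$ is not an ancestor of $X$ (so $X\to W$ is safe), not a descendant of $X$ in the confounder case (so $W\to X$ is safe), and never a descendant of $Y$ (so $W\to Y$ is safe). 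Concluding, a hypothetical common single-door set would have to contain and to avoid $W$ simultaneously, which is impossible.

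I expect the main obstacle to be the ``force in'' step in the subcase where $W$ is a descendant of $X$ through a \emph{long} directed path in $\mathcal{D}$: there one must argue that inserting the direct edge $X\to W$ (rendered invisible by equal coefficients) genuinely makes $W$ the unique vertex blocking an open $X$--$Y$ path, rather than merely one of several candidate blockers along the longer mediated route. Verifying that this insertion is simultaneously coefficient-invisible in $\mathcal{D}$, compatible with a single shared topological ordering, and sufficient to pin $W$ as a mandatory admissible adjustment is where the argument needs the most care; everything else transposes directly from the proof of Lemma~\ref{lemma:total_effect_x_indep_w}.
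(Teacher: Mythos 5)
Your proof is correct and takes essentially the same route as the paper's: exhibit one compatible pair of causal DAGs (adding a coefficient-equal edge $Y\to W$, so $W$ becomes a descendant of $Y$) that forces $W$ \emph{out} of any common single-door set, and another compatible pair (placing $W$ as a parent of $Y$ on an otherwise unblockable path $X\to W\to Y$ or $X\leftarrow W\to Y$) that forces it \emph{in}, yielding a contradiction. Your write-up is in fact more explicit than the paper's informal case enumeration, and the ``long mediated path'' worry you raise at the end is already dispatched by your own insertion of the direct, coefficient-invisible edge $X\to W$, which makes $W$ the unique non-endpoint vertex on the chain that must be blocked.
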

\begin{proof}
	\textcolor{green}{Suppose there is a directed path from $X$ to $Y$. 
		Now, consider a vertex $W$ for which there is no directed path from or to $Y$ in $\mathcal{D}$. It follows that in any pair of causal DAGs $\mathcal{G}^1$ and $\mathcal{G}^2$, $W$ can be either independent of $Y$ or a descendant of $Y$ or a parent of $Y$ or an ancestor, but not a parent, of $Y$. 
		Let us focus on the second and third cases.
		In the second case, $W$ cannot be included in any set that satisfies the single-door criterion in every causal DAG compatible with $\mathcal{D}$, since $W$ is a descendant of $Y$. In the third case, it is possible to imagine that thre exists a causal DAG compatible with $\mathcal{D}$ where $W$ is either a parent of a child of $X$. Therefore, in this causal DAG, $W$ must be included in any set that satisfies the single-door criterion in every causal DAG compatible with $\mathcal{D}$. This third case contradicts the second case, making it impossible to find a set that satisfies the single-door criterion in all compatible causal DAGs.
	}
\end{proof}

The following theorem extends Theorem~\ref{theorem:direct_effect} to the case where the two causal DAGs, derived from the two SCMs that form the difference graph, do not have the same topological ordering.

\begin{theorem}
	\label{theorem:direct_effect_cycle}
	Consider a difference graph $\mathcal{D}$ compatible with two different linear SCMs.
	Under Assumptions~\ref{assumption:hidden} and \ref{assumption:pos},
	$\alpha_{x,y}$ is  identifiable  in $\mathcal{D}$  \textcolor{green}{trivially or} by a common single-door iff:
	\begin{enumerate}[label=\textbf{D.\arabic*}]
		\item \label{item:theorem:direct_effect_1_cycle}  Condition~\ref{item:theorem:direct_effect_1} is satisfied and  $X\not\in An(Y, \mathcal{D})$; or
		\item  \label{item:theorem:direct_effect_2_cycle}  Condition~\ref{item:theorem:direct_effect_2}  is satisfied and
		$An(Y, \mathcal{D})\cap De(Y, \mathcal{D})=\{Y\}$.
	\end{enumerate}
	Furthermore, if Condition~\ref{item:theorem:direct_effect_1_cycle} is satisfied then $\alpha_{x,y}=0$ and if Condition~\ref{item:theorem:direct_effect_2_cycle} is satisfied then $\alpha_{x,y}=r_{YX. \mathbb{W}^{an}}$ where $r_{YX. \mathbb{W}^{an}}$ is the  regression coefficient of $X$ on $Y$ after $ \mathbb{W}^{an}$ is partialled~out.
\end{theorem}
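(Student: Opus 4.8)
The plan is to imitate the proof of Theorem~\ref{theorem:total_effect_cycle}: I would reuse the three lemmas behind Theorem~\ref{theorem:direct_effect} (Lemmas~\ref{lemma:direct_effect_y_to_x}, \ref{lemma:direct_effect_x_to_y} and \ref{lemma:direct_effect_y_indep_w}) and indicate precisely where Assumption~\ref{assumption:order} has to be traded for an explicit acyclicity condition. Because the single-door criterion constrains the adjustment set \emph{relative to} $Y$ (it forbids descendants of $Y$ and must block every path between $X$ and $Y$ other than the direct edge), the cycle-control hypothesis appearing in \ref{item:theorem:direct_effect_2_cycle} is naturally phrased around $Y$, namely $Ancestors(Y,\mathcal{D})\cap Descendants(Y,\mathcal{D})=\{Y\}$, which is the exact mirror of the condition on $X$ used for the total effect in \ref{item:theorem:total_effect_cycle_2}.

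For the sufficiency of \ref{item:theorem:direct_effect_1_cycle}, I would revisit the first half of Lemma~\ref{lemma:direct_effect_y_to_x} and replace the appeal to Assumption~\ref{assumption:order} by the explicit hypothesis $X\notin Ancestors(Y,\mathcal{D})$. This forbids any directed $X$-to-$Y$ path in $\mathcal{D}$, in particular the edge $X\rightarrow Y$; together with the directed path from $Y$ to $X$ guaranteed by \ref{item:theorem:direct_effect_1}, this pins the common path coefficient $\alpha_{x,y}$ to $0$, exactly as a missing edge yields a vanishing coefficient. For the sufficiency of \ref{item:theorem:direct_effect_2_cycle}, I would rerun the argument of Lemma~\ref{lemma:direct_effect_x_to_y} unchanged, except that the step ``$W$ cannot lie in both $\mathbb{W}^{anc}$ and $\mathbb{W}^{desc}$, otherwise $\mathcal{D}$ would not be acyclic'' is now justified directly by $Ancestors(Y,\mathcal{D})\cap Descendants(Y,\mathcal{D})=\{Y\}$. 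With that disjointness in hand, $\mathbb{W}^{anc}=Ancestors(Y,\mathcal{D})\backslash\{X,Y\}$ still blocks every non-direct path between $X$ and $Y$ while excluding every descendant of $Y$ in both $\mathcal{G}^1$ and $\mathcal{G}^2$, so it remains a common single-door set and $\alpha_{x,y}=r_{YX.\mathbb{W}^{anc}}$.

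For necessity I would argue by cases on whether $X$ and $Y$ are connected by directed paths. The second half of Lemma~\ref{lemma:direct_effect_y_to_x} (no directed path in either direction) and Lemma~\ref{lemma:direct_effect_y_indep_w} (a vertex $W$ disconnected from $Y$) survive verbatim and dispatch the failures of \ref{item:theorem:direct_effect_1} and \ref{item:theorem:direct_effect_2}. The genuinely new situation is that \ref{item:theorem:direct_effect_2} holds but a cycle runs through $Y$ and some vertex $W\neq X$. Mirroring the cycle-through-$X$ argument of Theorem~\ref{theorem:total_effect_cycle}, I would exhibit two $\mathcal{D}$-compatible pairs of causal DAGs: in one the cycle is oriented so that $W$ is a parent of $Y$ and opens a path between $X$ and $Y$ that any single-door set must block, forcing $W$ into the set; in the other the cycle is oriented so that $W$ is a descendant of $Y$, forbidding $W$ from the set. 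No common single-door can meet both demands. The degenerate case in which the only cycle uses $X$ and $Y$ (so that both $X\in Ancestors(Y,\mathcal{D})$ and $Y\in Ancestors(X,\mathcal{D})$) is handled as in the no-path situation: the orientation of the $X$--$Y$ relation is ambiguous across compatible SCMs, so $\alpha_{x,y}$ is not identifiable.

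The step I expect to be the main obstacle is this last necessity construction. One must produce two \emph{honestly} $\mathcal{D}$-compatible pairs of acyclic linear SCMs---each realising exactly the same set of mechanism changes, i.e.\ the same edge set of $\mathcal{D}$---whose orientations of the cycle place $W$ on opposite sides of $Y$, while making sure that in the parent orientation $W$ genuinely opens a path between $X$ and $Y$ that the single-door set is obliged to block (rather than a collider path that is already blocked). Checking that these orientations keep each $\mathcal{G}^i$ acyclic and faithfully induce $\mathcal{D}$ is the delicate bookkeeping; once it is in place, the remainder of the proof is a direct transcription of the total-effect argument with the roles of $X$ and $Y$ adjusted to suit the single-door criterion.
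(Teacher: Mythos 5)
Your proposal is correct and follows essentially the same route as the paper's own proof: both adapt Lemmas~\ref{lemma:direct_effect_y_to_x}, \ref{lemma:direct_effect_x_to_y} and \ref{lemma:direct_effect_y_indep_w} by trading Assumption~\ref{assumption:order} for the explicit no-directed-path and no-cycle-through-$Y$ conditions, and both establish necessity by orienting a cycle through $Y$ and some $W\neq X$ in two $\mathcal{D}$-compatible ways so that $W$ is forced into, respectively excluded from, any common single-door set (with the $X$--$Y$-only cycle handled like the no-path case). The construction you flag as the main obstacle is precisely the step the paper itself treats informally, so there is no gap relative to the published argument.
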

\begin{proof}
	The proof closely follows the structure of the proof for Theorem~\ref{theorem:direct_effect}, with a few key modifications. First, note that replacing Assumption~\ref{assumption:order} with the condition that there is no directed path from $X$ to $Y$ in the proof of Lemma~\ref{lemma:direct_effect_y_to_x} preserves its correctness. Second, replacing Assumption~\ref{assumption:order} with the condition that no cycle contains both $Y$ and another vertex in the proof of Lemma~\ref{lemma:direct_effect_x_to_y} also maintains its validity. 
	To extend Lemma~\ref{lemma:direct_effect_y_indep_w}, I introduce the condition that the direct effect cannot be identified using a common single-door criterion if there is a cycle involving $Y$. Combining this new condition with the second part of Lemma~\ref{lemma:direct_effect_y_to_x} demonstrates that these conditions of the Theorem are necessary. This can be shown as follows: if a cycle exists in $\mathcal{D}$ involving $Y$ and another vertex $W$ (where $W \neq X$), then there must be at least one causal DAG where $W$ is a parent of $Y$ and another causal DAG where $W$ is a descendant of $Y$. In both of these DAGs, $W$ can also act as either a parent or descendant of $X$. This implies that in the first DAG, $W$ must be included in any set that satisfies the single-door criterion, while in the second DAG, $W$ cannot be included in such a set. As a result, no single set can satisfy the single-door criterion across all causal DAGs compatible with $\mathcal{D}$. Furthermore, if the cycle only involves $X$ and $Y$, the direct effect is also not identifiable for the same reason it is not identifiable when there is no directed path between $X$ and $Y$.
\end{proof}

Theorem~\ref{theorem:direct_effect_cycle} indicates that the direct effect $\alpha_{x,y}$ is not identifiable in the difference graphs shown in Figures~\ref{fig:cyclic} (c) and (f), but it is identifiable in the difference graph depicted in Figure~\ref{fig:cyclic} (k).

\paragraph{Small simulation study for direct causal  changes.}
\textcolor{green}{
	The adjustment set provided by Theorems~\ref{theorem:direct_effect} and~\ref{theorem:direct_effect_cycle} is not the only possible adjustment set that could be selected using the single-door criterion if the true pair of DAGs were known. 
	So similarly, to Section~\ref{sec:identification_main_total},  in Figure~\ref{fig:sim} (b), the set used in Theorems~\ref{theorem:direct_effect} and~\ref{theorem:direct_effect_cycle} is compared with the set consisting of the parents of the outcome and the minimal   set derived from the single-door criterion. In this case, the set used in Theorems~\ref{theorem:direct_effect} and \ref{theorem:direct_effect_cycle} performs slightly less well than the set of parents and better than the minimal set.
}

\begin{figure}
	\begin{minipage}{0.47\textwidth}
		\begin{subfigure}{\textwidth}
				\caption{Total causal change.}
			\begin{tikzpicture}
				\pgfplotsset{
					error bars/.cd,
					x dir=none,
					y dir=both, y explicit,
				}
				\begin{axis}[
					footnotesize,
					width=7.5cm,
					height=2.6cm,
					tickpos=left,
					ytick align=inside,
					enlarge x limits=false,
					xticklabels={,,,Parents,,,, Minimal,,,, Theorem~\ref{theorem:total_effect}\&\ref{theorem:total_effect_cycle}},
					xmin=-0.1,
					xmax=0.5,
					ymin=-0.04,
					ymax=0.15,
					xtick style={draw=none},
					ymajorgrids=true,
					grid style=dashed,
					ytick={0, 0.1},
					legend columns=-1,
					]
					\addplot+ [blue, mark options={fill=blue}, mark=*] table [x expr={\thisrow{x} 0.0}, y=y, y error=ey]{
						x y ey
						0 0.04647007131037813 0.04316960227683745
					};
					\addplot+ [olive, mark options={fill=olive}, mark=*] table [x expr={\thisrow{x} 0.2}, y=y, y error=ey] {
						x y ey
						0 0.07209403564291963 0.12053060333983218
					};
					\addplot+ [purple, mark options={fill=purple}, mark=*] table [x expr={\thisrow{x} 0.4}, y=y, y error=ey] {
						x y ey
						0  0.045397292100194 0.04281655358393379
					};
					\coordinate (top) at (rel axis cs:0,1);
					\coordinate (bot) at (rel axis cs:1,0);
				\end{axis}
				\path (top-|current bounding box.west)-- 
				node[anchor=south,rotate=90] {Error} 
				(bot-|current bounding box.west);
				
			\end{tikzpicture}
		\end{subfigure}
	\end{minipage}
	\hfill 
	\begin{minipage}{0.47\textwidth}
		\begin{subfigure}{\textwidth}
	\caption{Direct causal change.}
			
			\begin{tikzpicture}
				\pgfplotsset{
					error bars/.cd,
					x dir=none,
					y dir=both, y explicit,
				}
				\begin{axis}[
					footnotesize,
					width=7.5cm,
					height=2.6cm,
					tickpos=left,
					ytick align=inside,
					enlarge x limits=false,
					xticklabels={,,,Parents,,,, Minimal,,,, Theorem~\ref{theorem:direct_effect}\&\ref{theorem:direct_effect_cycle}},
					xmin=-0.1,
					xmax=0.5,
					ymin=-0.04,
					ymax=0.15,
					xtick style={draw=none},
					ymajorgrids=true,
					grid style=dashed,
					ytick={0, 0.1},
					legend columns=-1,
					]
					\addplot+ [blue, mark options={fill=blue}, mark=*] table [x expr={\thisrow{x} 0.0}, y=y, y error=ey]{
						x y ey
						0  0.037231325754841615   0.02372742838770492
					};
					\addplot+ [olive, mark options={fill=olive}, mark=*] table [x expr={\thisrow{x} 0.2}, y=y, y error=ey] {
						x y ey
						0  0.12376492278886558 0.14837243451060964
					};
					\addplot+ [purple, mark options={fill=purple}, mark=*] table [x expr={\thisrow{x} 0.4}, y=y, y error=ey] {
						x y ey
						0 0.04078346136681322 0.02771149899706154
					};
					\coordinate (top) at (rel axis cs:0,1);
					\coordinate (bot) at (rel axis cs:1,0);
				\end{axis}
				\path (top-|current bounding box.west)-- 
				node[anchor=south,rotate=90] {Error} 
				(bot-|current bounding box.west);
				
			\end{tikzpicture}
		\end{subfigure}
	\end{minipage}
	\caption{Mean absolute error and standard deviation for estimating the total and direct causal changes of $X$ on $Y$. Subfigure (a) compares errors in total causal change estimation using (i) the set of parents of $X$, (ii) a minimal  set satisfying the back-door criterion, and (iii) the  set derived from Theorem~\ref{theorem:total_effect} and \ref{theorem:total_effect_cycle} using a difference graph. Subfigure (b) evaluates direct causal change estimation using (i) the set of parents of $Y$, (ii) a minimal  set satisfying the single-door criterion, and (iii) the  set from Theorem~\ref{theorem:direct_effect} and \ref{theorem:direct_effect_cycle}. Results are reported over $100$ pairs of dataset, consisting of $1000$ observations and $10$ variables, with each $10$ pairs sharing  the same difference graph for which the causal effect of interest is identifiable by our theorems. Each dataset is simulated using a linear SCM, where coefficients are randomly selected between $0$ and $1$, with Gaussian noise.}
	\label{fig:sim}
\end{figure}

\section{Related works and discussion}
\label{sec:rw}

There has been extensive research on causal reasoning using graphs that represent a class of graphs. For example, \cite{Maathuis_2013, Perkovic_2016, Perkovic_2020, Jaber_2022, Wang_2023} extended causal reasoning to partially oriented graphs such as CPDAGs and PAGs. PAGs are similar to CPDAGs but allow for hidden confounding. Both CPDAGs and PAGs represent multiple causal DAGs that share the same skeleton, some common edge orientations, and the same topological ordering. Moreover, extensive research has been conducted  for recovering these structures from observational data under specific assumptions~\citep{Spirtes_2000,Glymour_2019}.
From a different perspective, \cite{iwasaki_1994, chalupka_2016, Anand_2023, Assaad_2023, Ferreira_2024, Assaad_2024} have examined causal reasoning using cluster graphs, where each vertex corresponds to a cluster of low-level variables, and each edge denotes a causal relationship between two clusters. A cluster graph can represent multiple causal DAGs but does not necessarily share the same skeleton as the true causal DAG and is not necessarily acyclic. Like CPDAGs and PAGs, cluster graphs can also be discovered from data under certain assumptions~\citep{Wahl_2024}. 
In addition, other works, such as \cite{Richardson_1997, Forre_2020}, have specifically focused on cyclic directed graphs, where vertices do not represent clusters.
More closely related to difference graphs are selection diagrams~\citep{Bareinboim_2012} which can also represent differences between populations. There is a substantial body of literature on their use in causal reasoning~\citep{Bareinboim_2012} and methods for discovering them from data~\citep{Li_2023}. However, these two graphical representations serve different purposes. Selection diagrams provide more detailed information about stable causal relations and changes in noise, whereas difference graphs are more informative regarding changes in direct effects.
In summary, to date, there has been no investigation into causal reasoning using difference graphs. The existing work in the literature cannot be directly applied to difference graphs because the absence of an edge in these graphs has a different interpretation than in other types of graphs.

\textcolor{green}{This work addresses this gap and allow analysts to focus solely on the differences between two populations, simplifying reasoning, while still achieving identification for each domain individually.
	I emphasize that in this work, I assume that the difference graph is given and correct.
	The difference graph may be provided either by domain experts or by an algorithmic discovery method. In many domains, experts cannot construct a difference graph without knowing the true causal DAGs for both populations. However, in certain cases, this is feasible. For example, consider two cell populations: Population A (untreated) and Population B (treated with a drug that inhibits a specific protein). While the complete causal DAGs of these populations may be complex and unknown, domain knowledge about the drug’s direct mechanism of action allows researchers to construct a difference graph representing the expected changes in causal structure.
	When domain experts cannot directly specify a difference graph, they may rely on algorithmic approaches to discover it from data. Some algorithms discover an equivalence class of graphs~\citep{Wang_2018,Bystrova_2024_b}, while others aim to recover the exact difference graph~\citep{Chen_2023}. If the exact difference graph is known, all theorems intorduced in this paper, can be applied directly. For instance, consider the difference graph shown in \cite[Figure~4 (a)]{Chen_2023}, discovered from a real ovarian cancer dataset. Based on Theorem~\ref{theorem:total_effect}, the total effect of BIRC3 on PRKAR2B is not identifiable, whereas according to Theorem~\ref{theorem:direct_effect}, the direct effect of BIRC3 on PRKAR2B is identifiable, assuming linearity.
	In contrast, when working with Markov equivalence classes of difference graphs, the applicability of the results of this paper depends on whether the conditions of theorems hold across all graphs in the equivalence class.
	For instance, consider a scenario with four variables: $X$, $Y$, $W_1$, and $W_2$. If every difference graph within a given Markov equivalence class includes the edges $X \leftarrow W_1$, $X \rightarrow W_2$, and $X \rightarrow Y$, then the conditions of Theorem~\ref{theorem:total_effect} are satisfied across the entire equivalence class, ensuring that the total effect of $X$ on $Y$ is identifiable. Similarly, for the identification of the direct effect, suppose that in every difference graph within a given Markov equivalence class, the edges $W_1 \leftarrow Y$, $W_2 \rightarrow Y$, and $X \rightarrow Y$ are present. This guarantees that the conditions of Theorem~\ref{theorem:direct_effect} hold, allowing for the identification of $\alpha_{x,y}$.
}

\section{Conclusion}
\label{sec:conclusion}
This paper is the first to explore causal reasoning through the lens of difference graphs.
While difference graphs may not be as intuitive as classical causal DAGs for identifying causal effects, this work establishes conditions for identifying total effects (and total causal changes) in a nonparametric setting and direct effects (and direct causal changes)  in a linear setting, directly from the difference graph. 
%
For future works, it would be valuable to extend the results of this paper to direct effects in a nonparametric setting to path-specific effects.
Additionaly, if it will be shown that difference graphs with hidden confounding are useful and can be learned or provided by experts, then it will be valuable to investigate the identifiability conditions in the presence of hidden confounding.

\subsection*{Acknowledgements}
I thank Elias Bareinboim from Columbia University for the insightful discussions on difference graphs and selection diagrams.
This work was supported by the CIPHOD project (ANR-23-CPJ1-0212-01). 

\bibliographystyle{plainnat}
\bibliography{references.bib}




\end{document}